
\typeout{IJCAI-19 Multiple authors example}


\documentclass{article}
\pdfpagewidth=8.5in
\pdfpageheight=11in
\usepackage{natbib}

\usepackage[top=30truemm,bottom=30truemm,left=30truemm,right=30truemm]{geometry}

\usepackage{times}
\usepackage{soul}
\usepackage{url}
\usepackage[hidelinks]{hyperref}
\usepackage[utf8]{inputenc}
\usepackage[small]{caption}
\usepackage{graphicx}
\usepackage{amsmath}
\usepackage{booktabs}
\usepackage{amssymb}
\urlstyle{same}
\usepackage{mathtools}
\usepackage{amsthm}
\usepackage{todonotes}
\usepackage{comment}

\newtheorem{theorem}{Theorem}
\newtheorem{corollary}{Corollary}
\newtheorem{lemma}{Lemma}
\newtheorem{prop}{Proposition}
\theoremstyle{definition}
\newtheorem{definition}{Definition}

\newcommand{\defeq}{\vcentcolon=}
\renewcommand{\S}{{\rm S}}
\newcommand{\D}{{\rm D}}
\newcommand{\U}{{\rm U}}
\newcommand{\E}{\mathbf{E}}
\newcommand{\R}{\mathbb{R}}
\newcommand{\bx}{\boldsymbol{x}}

\newcommand{\w}{\boldsymbol{w}}
\renewcommand{\hat}{\widehat}
\newcommand{\SU}{{\rm SU}}
\newcommand{\DU}{{\rm DU}}
\newcommand{\SD}{{\rm SD}}

\renewcommand{\cite}{\citep}

\usepackage{lipsum}

\usepackage{authblk}
\usepackage{subfig}





\title{Classification from Pairwise Similarities/Dissimilarities and Unlabeled Data via Empirical Risk Minimization}

\author[1]{Takuya Shimada}
\author[1,2]{Han Bao}
\author[1,2]{Issei Sato}
\author[2,1]{Masashi Sugiyama}
\affil[1]{The University of Tokyo}
\affil[2]{RIKEN}

\date{}

\begin{document}

\maketitle

\begin{abstract}
Pairwise similarities and dissimilarities between data points might be easier to obtain
than fully labeled data in real-world classification problems, e.g., in privacy-aware situations.
To handle such pairwise information, an empirical risk minimization approach
has been proposed, giving an unbiased estimator of the classification
risk that can be computed only from pairwise similarities and unlabeled data.
However, this direction cannot handle pairwise dissimilarities so far.
On the other hand, semi-supervised clustering is one of the methods which can use both similarities and dissimilarities.
Nevertheless, they typically require strong geometrical assumptions on the
data distribution such as the manifold assumption, 
which may deteriorate the performance.
In this paper, we derive an unbiased risk estimator which can handle all of similarities/dissimilarities and unlabeled data.
We theoretically establish estimation error bounds and experimentally
demonstrate the practical usefulness of our empirical risk minimization method.

\end{abstract}
\section{Introduction}
In supervised classification, we need a vast amount of labeled training data to train our classifiers.
However, it is often not easy to obtain labels due to 
high labeling costs~\cite{chapelle2010semi},
privacy concern~\cite{warner1965randomized},
social bias~\cite{nederhof1985methods},
and difficulty to label data.
For such reasons, there is a situation in real-world classification problems,
where pairwise similarities (i.e., pairs of samples in the same class)
and pairwise dissimilarities (i.e., pairs of samples in different classes) might be easier to collect than fully labeled data.
For example, in the task of protein function prediction~\cite{Klein2002FromIC},
the knowledge about similarities/dissimilarities can be obtained as additional supervision,
which can be found by experimental means.
To handle such pairwise information, similar-unlabeled (SU) classification~\cite{bao2018classification} has been proposed, where the classification risk is estimated in an unbiased fashion from only similar pairs and unlabeled data.
Although they assumed that only similar pairs and unlabeled data are available,
we may also obtain dissimilar pairs in practice.
In this case, a method which can handle all of
similarities/dissimilarities and unlabeled data is desirable.

Semi-supervised clustering~\cite{wagstaff2001constrained} is one of the methods 
that can handle both similar and dissimilar pairs,
where must-link pairs (i.e., similar pairs) 
and cannot-link pairs (i.e., dissimilar pairs) are used to obtain meaningful clusters.
Existing work provides useful semi-supervised clustering methods based on the ideas that 
(i) must/cannot-links are treated as constraints~\cite{basu2002semi,wagstaff2001constrained,li2009constrained,hu2008maximum}, 
(ii) clustering is performed with metrics learned by semi-supervised metric learning~\cite{xing2003distance,bilenko2004integrating,weinberger2009distance,davis2007information,niu2012information},
and (iii) missing links are predicted by matrix completion~\cite{yi2013semi,chiang2015matrix}.
However, there is a gap between the motivation of clustering and classification algorithms, 
so applying semi-supervised clustering to classification might cause a problem.
For example, most of the semi-supervised clustering methods rely on geometrical or margin-based assumptions such as the cluster assumption and manifold assumption~\cite{chapelle2010semi},
which heavily depend on the structure of datasets.
Therefore, the range of applications of semi-supervised clustering can be restricted.
In addition, the objective of semi-supervised clustering is not basically the minimization of the classification risk, 
which may perform suboptimally in terms of classification accuracy.

In this paper, 
we propose similar-dissimilar-unlabeled (SDU) classification, 
where we can utilize all of pairwise similarities/dissimilarities and unlabeled data 
for unbiased estimation of the classification risk.
Similarly to SU classification, our method does not require geometrical assumptions on the data distribution and directly minimizes the classification risk.
As the first step to construct our SDU classification, we propose dissimilar-unlabeled (DU) classification and similar-dissimilar classification (SD), where only dissimilar and unlabeled data or similar and dissimilar data are required.
Then, we combine the risks of SU, DU, and SD classification linearly
in a similar manner to positive-negative-unlabeled (PNU) classification~\cite{sakai2017semi}.
One important question is which combination of these three risks is the best.
To answer this question, we establish estimation error bounds for each algorithm
and find that SD and DU classification are likely to outperform SU classification in terms of generalization. 
Therefore, we claim that the combination of SD and DU classification is the most promising approach for SDU classification.
Through experiments, we demonstrate the practical usefulness of our proposed method.

Our contributions can be summarized as follows.
\begin{itemize}
    \item We extend SU classification to DU and SD classification
    and propose SDU classification by combination of those algorithms 
    (Sec.\,\ref{sec:propose}).
    \item We establish estimation error bounds for each algorithm 
    and confirm also that unlabeled data helps to estimate the classification risk. 
    (Sec.\,\ref{sec:bounds_sudusd} and Sec.\,\ref{sec:bounds_sdu}).
    \item From the comparison of the estimation error bounds, 
    we find that SD classification and DU classification are likely to outperform SU classification,
    and provide an insight that the combination of SD and DU classification is the most promising for SDU classification 
    (Sec.\,\ref{sec:theo_sudusd}).
\end{itemize}
\section{Preliminary}
In this section, we first introduce our problem setting
and data generation process of similar pairs, dissimilar pairs, and unlabeled data.
Then we review the formulation of the existing SU classification algorithm.

\subsection{Problem Setting}
Let $\mathcal{X} \subset \mathbb{R}^d$ and $\mathcal{Y}=\{+1, -1\}$ 
be a $d$-dimensional example space and binary label space, respectively.
Suppose that each labeled example $(\boldsymbol{x},y) \in \mathcal{X} \times \mathcal{Y}$ is generated from the joint probability with density $p(\boldsymbol{x}, y)$ independently.
For simplicity, let $\pi_+$ and $\pi_-$ be class priors $p(y=+1)$ and $p(y=-1)$, which satisfy the condition $\pi_+ + \pi_-=1$, and $p_+(\bx)$ and $p_-(\bx)$ be class conditional densities $p(\bx | y=+1)$ and $p(\bx | y=-1)$.

The standard goal of supervised binary classification is to obtain a classifier $f: \mathcal{X} \rightarrow \mathbb{R}$ which minimizes the classification risk defined by

\begin{equation}
\label{eq:exp_pn_risk}
    R(f) \defeq \E_{(X,Y) \sim p(\boldsymbol{x},y)} \left[ \ell(f(X),Y) \right],
\end{equation}
where $\E_{(X,Y) \sim p(\boldsymbol{x},y)} \left[ \cdot \right]$ denotes the expected value over joint density $p(\boldsymbol{x},y)$ 
and $\ell : \mathbb{R} \times \mathcal{Y} \rightarrow \mathbb{R}_{+}$ is a loss function.

\subsection{Generation Process of Training Data}
We describe the data generation process of pairwise similar and dissimilar data and unlabeled data.
We assume that similar and dissimilar pairs are generated from pairwise distributions independently. 
We denote the event that two samples $(\boldsymbol{x}, y)$ and $(\boldsymbol{x}', y')$ have the same class label (i.e., $y=y'$) by $s=+1$, and otherwise by $s=-1$.
Then, similar and dissimilar pairs are generated from an underlying joint density $p(\boldsymbol{x}, \boldsymbol{x}', s)$ as follows:

\begin{align}
\label{eq:gen_process_sd}
    & \mathfrak{D_{\SD}} \defeq \{ (\boldsymbol{x}_{{\SD}. i}, \boldsymbol{x}'_{{\SD}, i}, s_i) \}_{i=1}^{n_{\SD}} \sim p(\boldsymbol{x}, \boldsymbol{x}', s),
\end{align}
where
\begin{align}
    & p(\boldsymbol{x}, \boldsymbol{x}', s=+1) = p(s=+1) p(\boldsymbol{x}, \boldsymbol{x}' | s=+1) = p(y=y') p(\boldsymbol{x}, \boldsymbol{x}' | y=y'), \\
    & p(\boldsymbol{x}, \boldsymbol{x}', s=-1) = p(s=-1) p(\boldsymbol{x}, \boldsymbol{x}' | s=-1) = p(y \neq y') p(\boldsymbol{x}, \boldsymbol{x}' | y \neq y'). 
\end{align}
Here, $n_\SD$ pairs in $\mathfrak{D}_\SD$ can be decomposed into 
$n_\S$ similar pairs and $n_\D$ dissimilar pairs based on the variable $s$.
\begin{align}
    & \mathfrak{D}_\S \defeq
     \{ (\boldsymbol{x}_{{\S}. i}, \boldsymbol{x}'_{{\S}, i})\}_{i=1}^{n_{\S}} 
     = \left\{ (\boldsymbol{x}, \boldsymbol{x}') \mid \left(\boldsymbol{x}, \boldsymbol{x}', s=+1 \right) \in \mathfrak{D}_\SD \right\}, \\
    & \mathfrak{D}_\D \defeq
    \{ (\boldsymbol{x}_{{\D}. i}, \boldsymbol{x}'_{{\D}, i})\}_{i=1}^{n_{\D}} 
    = \left\{ (\boldsymbol{x}, \boldsymbol{x}') \mid \left(\boldsymbol{x}, \boldsymbol{x}', s=-1 \right) \in \mathfrak{D}_\SD \right\}.
\end{align}
For convenience, we introduce notations representing the similar and dissimilar proportions and conditional densities.
\begin{align}
    & \pi_\S \defeq p(y=y'), \\
    & \pi_\D \defeq p(y \neq y'), \\
    & p_\S(\boldsymbol{x},\boldsymbol{x}') \defeq p(\boldsymbol{x}, \boldsymbol{x}' | y=y'), \\
    & p_\D(\boldsymbol{x},\boldsymbol{x}') \defeq p(\boldsymbol{x}, \boldsymbol{x}' | y \neq y').
\end{align}
Then, we can consider the generation process of similar and dissimilar pairs as
\begin{align*}
    & \mathfrak{D}_\S \sim p_\S(\boldsymbol{x}, \boldsymbol{x}'), \\
    & \mathfrak{D}_\D \sim p_\D(\boldsymbol{x}, \boldsymbol{x}').
\end{align*}
Note that we assume each sample in a pair is generated independently, namely,
$(\bx,y), (\boldsymbol{x}', y') \sim p(\bx, y)$.
Thus, we have
\begin{align*}
    & \pi_\S = p(y=+1)p(y'=+1) + p(y=-1)p(y'=-1) 
     = \pi_+^2 + \pi_-^2, \\
    & \pi_\D = p(y=+1)p(y'=-1) + p(y=-1)p(y'=+1) 
     = 2 \pi_+ \pi_-, \\
    & p_\S(\bx, \bx') = \frac{\pi_+^2}{\pi_\S} p_+(\bx)p_+(\bx')
                + \frac{\pi_-^2}{\pi_\S} p_-(\bx)p_-(\bx'), \\
    & p_\D(\bx, \bx') = \frac{1}{2} p_+(\bx)p_-(\bx')
    + \frac{1}{2} p_-(\bx)p_+(\bx').
\end{align*}
We assume unlabeled samples are generated as follows:
\begin{align}
    \mathfrak{D_\U} \defeq \{\boldsymbol{x}_{\U,i}\}_{i=1}^{n_\U} \sim p_\U(\boldsymbol{x}) = \pi_+ p_+(\boldsymbol{x}) + \pi_- p_-(\boldsymbol{x}).
\end{align}

\subsection{SU classification}
In \cite{bao2018classification}, SU classification was proposed,
where the classification risk is estimated in an unbiased fashion only from similar pairs and unlabeled data.
\begin{prop}[Theorem~1 in \cite{bao2018classification}]
\label{theo:su}
The classification risk in Eq.\,\eqref{eq:exp_pn_risk} can be equivalently represented as
\begin{equation}
\begin{split}
    R_{\SU}(f) = & \pi_\S \E_{(X,X') \sim p_\S(\boldsymbol{x}, \boldsymbol{x}')}
    \left[
            \frac{\widetilde{\mathcal{L}}(f(X)) + \widetilde{\mathcal{L}}(f(X'))}{2}
    \right] \\
    & + \E_{X \sim p_\U(\boldsymbol{x})}
    \left[
            \mathcal{L}(f(X), -1)
    \right],
\end{split}
\end{equation}
where
\begin{align}
    & \mathcal{L}(z, t) \defeq \frac{\pi_{+}}{\pi_{+} - \pi_{-}} \ell(z, t) - \frac{\pi_{-}}{\pi_{+} - \pi_{-}} \ell(z, -t), \\
    & \widetilde{\mathcal{L}} (z) \defeq \frac{1}{\pi_+ - \pi_-} \ell(z,+1) - \frac{1}{\pi_+ - \pi_-} \ell(z,-1).
\end{align}
\end{prop}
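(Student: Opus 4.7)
The plan is to rewrite the positive/negative class-conditional densities as linear combinations of the marginal similar density and the unlabeled density, and then plug these expressions into the standard decomposition of $R(f)$. First I would write the classification risk in its standard form
\[
R(f) = \pi_+ \E_{X \sim p_+}[\ell(f(X),+1)] + \pi_- \E_{X \sim p_-}[\ell(f(X),-1)],
\]
so that the goal reduces to expressing $\pi_+ p_+(\bx)$ and $\pi_- p_-(\bx)$ in terms of $p_\S(\bx) \defeq \int p_\S(\bx,\bx')\,\mathrm{d}\bx'$ and $p_\U(\bx)$.

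Next I would marginalize the given pairwise density to obtain $\pi_\S p_\S(\bx) = \pi_+^2 p_+(\bx) + \pi_-^2 p_-(\bx)$, and combine this with $p_\U(\bx) = \pi_+ p_+(\bx) + \pi_- p_-(\bx)$. This is a $2 \times 2$ linear system in $p_+(\bx), p_-(\bx)$ whose determinant is $\pi_+\pi_-(\pi_+ - \pi_-)$. Solving it yields
\[
\pi_+ p_+(\bx) = \frac{\pi_\S p_\S(\bx) - \pi_- p_\U(\bx)}{\pi_+ - \pi_-}, \qquad \pi_- p_-(\bx) = \frac{\pi_+ p_\U(\bx) - \pi_\S p_\S(\bx)}{\pi_+ - \pi_-}.
\]
Substituting these into the risk and grouping the coefficients of $p_\S(\bx)$ and $p_\U(\bx)$ would exactly produce the single-variable version
\[
R(f) = \pi_\S \E_{X \sim p_\S(\bx)}[\widetilde{\mathcal{L}}(f(X))] + \E_{X \sim p_\U(\bx)}[\mathcal{L}(f(X),-1)],
\]
with $\mathcal{L}$ and $\widetilde{\mathcal{L}}$ as defined in the statement.

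Finally, to match the stated form involving pairs, I would use the fact that $p_\S(\bx,\bx')$ is symmetric in its two arguments, so each marginal equals $p_\S$. Hence the single-variable expectation can be symmetrized as
\[
\E_{X \sim p_\S(\bx)}[\widetilde{\mathcal{L}}(f(X))] = \E_{(X,X') \sim p_\S(\bx,\bx')}\!\left[\frac{\widetilde{\mathcal{L}}(f(X)) + \widetilde{\mathcal{L}}(f(X'))}{2}\right],
\]
which completes the identification with $R_\SU(f)$.

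The only delicate point, and what I would flag as the main obstacle, is the invertibility of the linear system: the derivation assumes $\pi_+ \neq \pi_-$ (otherwise the coefficients $\frac{1}{\pi_+-\pi_-}$ blow up and $p_\U$, $p_\S$ are no longer independent enough to recover $p_\pm$). I would state this nondegeneracy assumption up front. Everything else is routine bookkeeping: substitution, regrouping coefficients into $\mathcal{L}$ and $\widetilde{\mathcal{L}}$, and a symmetry argument on $p_\S(\bx,\bx')$.
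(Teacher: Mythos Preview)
Your proof is correct. Note, however, that the paper does not actually prove this proposition: it is quoted verbatim as Theorem~1 of \cite{bao2018classification} and used as a black box (the appendix takes $R_\SU$ as the starting point and derives $R_\DU$, $R_\SD$ from it). Your argument---recover $\pi_+ p_+$ and $\pi_- p_-$ from the pair $(\pi_\S \widetilde{p}_\S, p_\U)$ via a $2\times 2$ linear system, substitute into $R(f)=\pi_+\E_{p_+}[\ell(\cdot,+1)]+\pi_-\E_{p_-}[\ell(\cdot,-1)]$, regroup into $\widetilde{\mathcal L}$ and $\mathcal L(\cdot,-1)$, then symmetrize over the pair---is exactly the standard route and matches how the paper implicitly treats the marginal $\widetilde{p}_\S$ in Appendix~A.1. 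Your flag that $\pi_+\neq\pi_-$ is required is also appropriate; the paper assumes this throughout (all bounds carry the factor $1/|\pi_+-\pi_-|$).
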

We can train a classifier by minimizing the empirical version of ${R}_\SU$, $\widehat{R}_\SU$,
from $(\mathfrak{D}_\S, \mathfrak{D}_\U)$.
\section{Proposed Method}

\label{sec:propose}
In this section, we propose SDU classification, 
where the classification risk is estimated from similar and dissimilar pairs and unlabeled data.
For the preparation to construct SDU classification,
we extend SU classification to DU and SD classification first.

\subsection{DU and SD classification}
\label{sec:dusd}
As well as SU classification, the classification risk can be estimated only from
dissimilar pairs and unlabeled data (DU),
or similar pairs and dissimilar pairs (SD) as follows.

\begin{theorem}
\label{theo:dusd_risk}
The classification risk in Eq.\,\eqref{eq:exp_pn_risk} can be equivalently represented as
\begin{equation}
\begin{split}
    R_{\DU}(f) = & \pi_\D \E_{(X,X') \sim p_\D(\boldsymbol{x}, \boldsymbol{x}')}
    \left[
            - \frac{\widetilde{\mathcal{L}}(f(X)) + \widetilde{\mathcal{L}}(f(X'))}{2}
    \right] \\
    & + \E_{X \sim p_\U(\boldsymbol{x})}
    \left[
            \mathcal{L}(f(X), +1)
    \right],
\end{split}
\end{equation}
\begin{equation}
\begin{split}
R_{\rm SD}(f) 
    & = \pi_\S \E_{(X,X') \sim p_\S(\boldsymbol{x}, \boldsymbol{x}')}
    \left[
            \frac{\mathcal{L}(f(X), +1) + \mathcal{L}(f(X'), +1)}{2}
    \right] \\
    & \quad + \pi_\D \E_{(X,X') \sim p_\D (\boldsymbol{x}, \boldsymbol{x}')}
    \left[
            \frac{\mathcal{L}(f(X), -1) + \mathcal{L}(f(X'), -1)}{2}
    \right],
\end{split}
\end{equation}
where $\mathcal{L}(z, t)$ and $\widetilde{\mathcal{L}} (z)$ are defined in Theorem~\ref{theo:su}.\footnote{Due to limited space, the proofs of theorems are shown in Appendix \ref{sec_a:proofs}.}
\end{theorem}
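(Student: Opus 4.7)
The plan is to rewrite every expectation appearing in $R_{\DU}(f)$ and $R_{\SD}(f)$ as an integral against the class-conditional densities $p_+$ and $p_-$, and then verify that the resulting coefficients of $\ell(\cdot,+1) p_+(\bx)$ and $\ell(\cdot,-1) p_-(\bx)$ equal $\pi_+$ and $\pi_-$ respectively, so that the integral reproduces $R(f)$ from \eqref{eq:exp_pn_risk}. This mirrors the derivation of $R_{\SU}$ in Proposition~\ref{theo:su}.

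First I would compute the marginals of $p_\S$ and $p_\D$. Integrating $\bx'$ out of the decompositions given in the preliminary section yields
\begin{align*}
    \pi_\S\, p_\S(\bx) &= \pi_+^2\, p_+(\bx) + \pi_-^2\, p_-(\bx), \\
    \pi_\D\, p_\D(\bx) &= \pi_+\pi_- \bigl(p_+(\bx) + p_-(\bx)\bigr).
\end{align*}
Since $\widetilde{\mathcal{L}}(f(\bx))$ and $\mathcal{L}(f(\bx),t)$ each depend on a single point, exchangeability of $X$ and $X'$ lets me collapse each symmetric pairwise expectation $\frac{g(f(X))+g(f(X'))}{2}$ into a single-point integral against the corresponding marginal. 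This reduction is purely mechanical.

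Next I would substitute these identities, together with $p_\U(\bx) = \pi_+ p_+(\bx) + \pi_- p_-(\bx)$, into $R_{\DU}(f)$ and $R_{\SD}(f)$, and collect the coefficients of $p_+(\bx)$ and $p_-(\bx)$. For DU this reduces the claim to the two pointwise equations
\begin{align*}
    -\pi_-\, \widetilde{\mathcal{L}}(z) + \mathcal{L}(z,+1) &= \ell(z,+1), \\
    -\pi_+\, \widetilde{\mathcal{L}}(z) + \mathcal{L}(z,+1) &= \ell(z,-1),
\end{align*}
and for SD to
\begin{align*}
    \pi_+ \mathcal{L}(z,+1) + \pi_- \mathcal{L}(z,-1) &= \ell(z,+1), \\
    \pi_- \mathcal{L}(z,+1) + \pi_+ \mathcal{L}(z,-1) &= \ell(z,-1).
\end{align*}
Both pairs follow by elementary algebra from the definitions of $\mathcal{L}$ and $\widetilde{\mathcal{L}}$ given in Proposition~\ref{theo:su}: in the DU pair, subtracting the two equations recovers the defining identity of $\widetilde{\mathcal{L}}$, and plugging back recovers $\mathcal{L}(z,+1)$; in the SD pair, the combinations telescope using $\pi_+^2-\pi_-^2 = \pi_+ - \pi_-$ (since $\pi_+ + \pi_- = 1$).

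The main obstacle is really just careful bookkeeping: tracking the sign attached to $\widetilde{\mathcal{L}}$ in the DU formula (which accounts for dissimilar pairs arising from opposite-class components), and verifying that the $\pi_\S$, $\pi_\D$ prefactors combine correctly with the internal fractions $\pi_+^2/\pi_\S$ and $\tfrac{1}{2}$ appearing in $p_\S$ and $p_\D$. Everything implicitly relies on $\pi_+ \neq \pi_-$ so that the denominator $\pi_+ - \pi_-$ in the definitions of $\mathcal{L}$ and $\widetilde{\mathcal{L}}$ is nonzero; I would state this as a standing assumption.
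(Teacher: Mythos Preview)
Your argument is correct. You expand each of $R_{\DU}$ and $R_{\SD}$ directly against the class-conditional densities $p_+,p_-$ via the marginals of $p_\S,p_\D$, and then reduce the claim to four pointwise identities in $\mathcal{L}$ and $\widetilde{\mathcal{L}}$, all of which check out from the definitions. The paper takes a different route: it starts from the already-established SU identity (Proposition~\ref{theo:su}) and algebraically transforms it into the DU and SD forms by using the mixture decomposition $\E_{p(\bx,\bx')}[\cdot]=\pi_\S\E_{p_\S}[\cdot]+\pi_\D\E_{p_\D}[\cdot]$ together with $\mathcal{L}(z,+1)-\mathcal{L}(z,-1)=\widetilde{\mathcal{L}}(z)$. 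Your approach is more self-contained (it does not rely on Proposition~\ref{theo:su} as an input) and arguably more transparent about why each formula works, at the cost of a little more bookkeeping; the paper's approach is shorter once SU is in hand and makes the structural relationship among the three estimators more visible. Either way the substance is the same elementary linear algebra in the two-dimensional span of $\ell(\cdot,+1)$ and $\ell(\cdot,-1)$, and your remark that $\pi_+\neq\pi_-$ is implicitly required is well taken.
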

We can train a classifier by minimizing the empirical version of $R_\DU$ or $R_\SD$, $\widehat{R}_\DU$ or $\widehat{R}_\SD$,
obtained from $(\mathfrak{D}_\D, \mathfrak{D}_\U)$ or $(\mathfrak{D}_\S, \mathfrak{D}_\D)$.
We call the training with these risks DU classification and SD classification, respectively.

\subsection{SDU classification}
We propose SDU classification by combining SU, DU, and SD classification.
The main idea of our method is to combine risks obtained from SU, DU and SD data in a similar manner to Positive-Negative-Unlabeled (PNU) classification~\cite{sakai2017semi}.

With a positive real value $\gamma \in [0,1]$,
we define the following three representations of the classification risk.
\begin{align}
\label{eq:risk_SDSU}
    & R_{\rm SDSU}^{\gamma}(f) \defeq (1-\gamma) R_{\rm SD}(f) + \gamma R_{\rm SU}(f), \\
\label{eq:risk_SDDU}
    & R_{\rm SDDU}^{\gamma}(f) \defeq (1-\gamma) R_{\rm SD}(f) + \gamma R_{\rm DU}(f), \\
\label{eq:risk_SUDU}
    & R_{\rm SUDU}^{\gamma}(f) \defeq (1-\gamma) R_{\rm SU}(f) + \gamma R_{\rm DU}(f).
\end{align}
We call the training with these risks SDSU classification, SDDU classification, and SUDU classification, respectively.
Here, one spontaneous question is that which one is the most promising algorithm.
We can claim the combination of SD and DU risks is the most promising from the point of view of estimation error bounds.
We discuss the details in Sec.\,\ref{sec:theo_sudusd}.

\subsection{Practical Implementation}
We investigate the objective function when using a linear classifier 
$f(\bx) = {\w}^{\top} \boldsymbol{\phi}(\bx) + b$, 
where $\w \in \R^k$ and $b \in \R$ are parameters and $\boldsymbol{\phi}: \R^d \rightarrow \R^k$ is a mapping function.
For simplicity, we consider the following generalized optimization problem,
With positive real values  $ \boldsymbol{\gamma} = \{ \gamma_1,\gamma_2,\gamma_3 \} $ which satisfy the condition $\gamma_1 + \gamma_2 + \gamma_3 = 1$, we denote our optimization problem by
\begin{equation}
\label{eq:min_j}
\min_{\w} \hat{J}^{\boldsymbol{\gamma}}(\w),
\end{equation}
where
\begin{align}
\label{eq:obj}
 & \hat{J}^{\boldsymbol{\gamma}}(\w) = \hat{R}_{\rm SDU}^{\boldsymbol{\gamma}}(\w) + \frac{\lambda}{2} \| \w \|^2, \\
 \label{eq:emp_risk_sdu}
 & \hat{R}_{\rm SDU}^{\boldsymbol{\gamma}} (\w)
 = \gamma_1 \hat{R}_{\rm SU}(\w) + \gamma_2 \hat{R}_{\rm DU}(\w) + \gamma_3 \hat{R}_{\rm SD}(\w).
\end{align}
Here, $\lambda > 0$ is a parameter of L2 regularization.
When $\gamma_1 = 0$, $\gamma_2=0$ and $\gamma_3=0$, 
this optimization corresponds to SDDU, SDSU, SUDU classification, respectively.

From now on, we assume the loss function is a margin loss function.
As defined in \cite{mohri2018foundations}, we call $\ell$ a margin loss function if there exists $\psi: \R \rightarrow \R_+$ such that $\ell(z,t)=\psi(tz).$
In general, the optimization problem in Eq.\,\eqref{eq:min_j} is not a convex problem.
However, if we choose $\ell$ satisfies the following property, the optimization problem becomes convex. 
\begin{theorem}
\label{theo:conv}
Suppose that the loss function $\ell(z,t)$ is a convex margin loss,
twice differentiable in z almost everywhere (for every fixed $t\in\{\pm 1\}$) and satisfies following condition.
\begin{equation}
\label{eq:loss_assumption}
    \ell(z, +1) - \ell(z, -1) = -z.
\end{equation}
Then, the optimization problem in Eq.\,(\ref{eq:min_j}) is a convex problem.
\end{theorem}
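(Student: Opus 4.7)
The plan is to show convexity of $\hat{J}^{\boldsymbol{\gamma}}$ in $\w$ building block by building block. The regularizer $\tfrac{\lambda}{2}\|\w\|^{2}$ is (strongly) convex, so it suffices to show that the empirical risk $\hat{R}_{\SDU}^{\boldsymbol{\gamma}}(\w)$ is convex. Since this risk is a nonnegative combination (with $\gamma_{1},\gamma_{2},\gamma_{3}\geq 0$ that sum to one) of $\hat{R}_{\SU}$, $\hat{R}_{\DU}$, and $\hat{R}_{\SD}$, and each of those is an empirical average of terms of the form $\mathcal{L}(f(\bx),\pm 1)$ or $\pm\widetilde{\mathcal{L}}(f(\bx))$, it suffices to show that the scalar maps $z\mapsto\mathcal{L}(z,+1)$, $z\mapsto\mathcal{L}(z,-1)$, $z\mapsto\widetilde{\mathcal{L}}(z)$, and $z\mapsto -\widetilde{\mathcal{L}}(z)$ are convex in $z$. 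Composing each with the affine map $\bx\mapsto \w^{\top}\boldsymbol{\phi}(\bx)+b$ then preserves convexity in $\w$, and averaging and nonnegative combination do likewise.

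The key step is to invoke the assumption $\ell(z,+1)-\ell(z,-1)=-z$ in order to cancel the sign-ambiguous coefficients $\pi_{\pm}/(\pi_{+}-\pi_{-})$ appearing in $\mathcal{L}$. Substituting $\ell(z,-1)=\ell(z,+1)+z$ (and symmetrically $\ell(z,+1)=\ell(z,-1)-z$) into the definitions collapses the two $\ell$-terms into one and yields
\begin{align*}
\widetilde{\mathcal{L}}(z) &= -\frac{z}{\pi_{+}-\pi_{-}},\\
\mathcal{L}(z,+1) &= \ell(z,+1) - \frac{\pi_{-}}{\pi_{+}-\pi_{-}}\,z,\\
\mathcal{L}(z,-1) &= \ell(z,-1) + \frac{\pi_{-}}{\pi_{+}-\pi_{-}}\,z.
\end{align*}
Each of the latter two is the sum of the convex loss $\ell(\cdot,\pm 1)$ with a linear term, hence convex; $\pm\widetilde{\mathcal{L}}$ is linear and therefore convex too. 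The twice-differentiability hypothesis lets us, if we prefer, record this as $\partial_{z}^{2}\mathcal{L}(z,\pm 1)=\partial_{z}^{2}\ell(z,\pm 1)\geq 0$ almost everywhere.

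The main obstacle worth flagging is exactly what the assumption \eqref{eq:loss_assumption} is designed to overcome: a priori the coefficients in the definition of $\mathcal{L}$ are of indefinite sign (when $\pi_{-}>\pi_{+}$ the coefficient $\pi_{+}/(\pi_{+}-\pi_{-})$ is negative), so pointwise convexity of $\ell$ alone does not imply convexity of $\mathcal{L}$ as a linear combination of $\ell(\cdot,+1)$ and $\ell(\cdot,-1)$. The condition $\ell(z,+1)-\ell(z,-1)=-z$ is precisely what makes one convex-loss term absorb the other, leaving a single copy of $\ell(\cdot,\pm 1)$ with coefficient $+1$ plus an affine correction. Once this cancellation is spotted, the remainder of the argument is a routine assembly via the standard convexity-preserving operations (affine composition, nonnegative combination, sum).
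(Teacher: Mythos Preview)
Your proposal is correct, and it takes a somewhat different route from the paper's own proof. The paper expands the full objective $\hat{J}^{\boldsymbol{\gamma}}(\w)$ and computes its Hessian in $\w$ directly, establishing positive semidefiniteness by invoking the margin-loss structure: since $\ell(z,t)=\psi(tz)$, one has $\partial_z^2\ell(z,+1)=\partial_z^2\ell(z,-1)=\psi''(tz)$, so the mixed-sign coefficients multiplying $\ell(\cdot,+1)$ and $\ell(\cdot,-1)$ collapse to a single nonnegative factor times $\psi''$. Your argument instead applies the linear-difference condition \eqref{eq:loss_assumption} directly at the level of the building blocks $\mathcal{L}(\cdot,\pm 1)$ and $\widetilde{\mathcal{L}}$, reducing each to a single copy of the convex loss plus an affine term, and then assembles the whole objective via standard convexity-preserving operations. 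Your route is more modular and, as a side benefit, shows that the margin-loss hypothesis is not strictly needed for this theorem: convexity of $\ell(\cdot,\pm 1)$ together with \eqref{eq:loss_assumption} already forces $\partial_z^2\ell(z,+1)=\partial_z^2\ell(z,-1)$, which is all the cancellation requires. The paper's explicit Hessian computation, on the other hand, has the advantage of yielding the quadratic form that is reused immediately afterward for the closed-form squared-loss solution.
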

Next, we consider the case of squared loss and double hinge loss, which satisfy the condition in Eq.\,\eqref{eq:loss_assumption}.

\subsubsection{Squared Loss}
Suppose that we use squared loss defined by
\begin{equation}
    \label{eq:loss_sq}
    \ell_{\rm SQ}(z, t) = \frac{1}{4}(tz-1)^2.
\end{equation}
Then, the objective function in $\widehat{J}^{\boldsymbol{\gamma}}(\w)$ can be written as
\begin{equation}
    \begin{split}
    \widehat{J}^{\boldsymbol{\gamma}}(\w)
    & = \frac{1}{4} \w^\top \left\{\gamma_3 \left( \frac{\pi_\S}{2n_\S}X_\S^\top X_\S + \frac{\pi_\D}{2n_\D}X_\D^\top X_\D \right) \right. 
    \left. + \frac{\gamma_1 + \gamma_2}{n_\U} X_\U^\top X_\U + 2 \lambda I \right\} \w \\
  & +\frac{1}{\pi_+ - \pi_-}\left\{ -\frac{\pi_\S}{2 n_\S}\left( \gamma_1 + \frac{\gamma_3}{2} \right)  X_\S^\top \mathbf{1} \right.
   \left. + \frac{\pi_\D}{2 n_\D}\left( \gamma_2 + \frac{\gamma_3}{2} \right) X_\D^\top \mathbf{1}
  + \frac{1}{2 n_\U}\left( \gamma_1 - \gamma_2 \right) X_\U^\top \mathbf{1} 
  \right\} \w \\
  & +\text{const.},
\end{split}
\end{equation}
where 

\begin{align*}
    X_\S &\defeq [\boldsymbol{\phi}(\bx_{\S,1}),\boldsymbol{\phi}(\bx'_{\S,1}),\dots, \boldsymbol{\phi}(\bx_{\S,n_\S}),\boldsymbol{\phi}(\bx'_{\S,n_\S})]^\top, \\
    X_\D &\defeq [\boldsymbol{\phi}(\bx_{\D,1}),\boldsymbol{\phi}(\bx'_{\D,1}),\dots,  \boldsymbol{\phi}(\bx_{\D,n_\D}),\phi(\bx'_{\D,n_\D})]^\top, \\
    X_\U &\defeq [\boldsymbol{\phi}(\bx_{\U,1}),\dots, \boldsymbol{\phi}(\bx_{\U, n_\U})]^\top.
\end{align*}
We denote $\mathbf{1}$ as the vector whose elements are all ones and $I$ as the identity matrix. 
Since this function has a quadratic form with respect to $\w$,
the solution of this minimization problem can be obtained analytically.

\subsubsection{Double Hinge Loss}
Standard hinge loss $\ell_{\rm H}(z,t) = \max(0, -tz)$ does not satisfy condition in Eq.\,\eqref{eq:loss_assumption}. 
As an alternative, double hinge loss 
$\ell_{\rm DH}(z, t) = \max (-tz, \max (0, \frac{1}{2}-\frac{1}{2}tz)$
is proposed by \cite{du2015convex}.
When we use $\ell_{\rm DH}$,
we can solve the optimization problem in Eq.\,\eqref{eq:min_j}
by quadratic programming.\footnote{The details is described in Appendix \ref{sec_a:opt}.}

\subsection{Class Prior Estimation}
Although we have to know the class prior $\pi_+$ before training
for calculation of empirical risks $\widehat{R}_\SD$, $\widehat{R}_\SU$, and $\widehat{R}_\DU$,
$\pi_+$ can be estimated from the number of similar pairs $n_\S$ and the number of dissimilar pairs $n_\D$.
First, $\pi_+$ and $\pi_\S$ has following relationship.
\begin{equation}
\label{eq:cpe}
    \pi_+ = 
    \begin{cases}
    \frac{1 + \sqrt{2\pi_\S - 1}}{2} & (\pi_+ \geq 0.5), \\
    \frac{1 - \sqrt{2\pi_\S - 1}}{2} & (\pi_+ < 0.5),
    \end{cases}
\end{equation}
The above equality is obtained from 
$2\pi_\S - 1 = \pi_\S - \pi_\D = (\pi_+ - \pi_-) = (2\pi_+ - 1)^2$.
Note that $\widehat{\pi}_\S = {n_\S} / ({n_\S + n_\D})$ is an unbiased estimator of $\pi_\S$.
Thus, $\pi_+$ can be estimated by plugging $\widehat{\pi}_\S$ into Eq.\,\eqref{eq:cpe}.
\section{Theoretical Analysis}
\label{sec:analysis}
In this section, we analyze the generalization bound for our algorithms.
As the first step, we show estimation error bounds for SU, DU, and SD classification via Rademacher complexity.
With those bounds, we compare each algorithm and then we clarify which algorithm 
among the three SDU classification approaches is the most promising.
Finally, we show the estimation error bound for SDU classification.

\subsection{Estimation Error Bounds for SU, DU, and SD}
\label{sec:bounds_sudusd}
First, we investigate estimation error bounds for SU, DU and SD classification.
Let $\mathcal{F} \subset \mathbb{R}^\mathcal{X}$ be a function class of the specified model.
\begin{definition}[Rademacher Complexity]
Let $n$ be a positive integer,
$Z_1, \dots, Z_n$ be i.i.d. random variables drawn from a probability distribution with density $\mu$,
$\mathcal{H}=\{ h : \mathcal{Z} \rightarrow \mathbb{R} \}$be a class of measurable functions,
and $\mathbf{\sigma} = (\sigma_1, \dots, \sigma_n)$ be Rademacher variables,
i.e., random variables taking $+1$ and $-1$ with even probabilities.
Then the (expected) Rademacher complexity of $\mathcal{H}$ is defined as
\begin{equation}
    \mathfrak{R}(\mathcal{H}; n , \mu) 
    \defeq \E_{Z_1, \dots, Z_n \sim \mu} \E_{\mathbf{\sigma}} 
    \left[ \sup_{h \sim \mathcal{H}} \frac{1}{n} \sum_{i=1}^{n} \sigma_i h(Z_i) \right].
\end{equation}
\end{definition}
For the function class $\mathcal{F}$ and any probability density $\mu$, we assume
\begin{equation}
\label{eq:assumption}
\mathfrak{R}(\mathcal{F}; n , \mu) \leq \frac{C_{\mathcal{F}}}{ \sqrt{n}}.
\end{equation}
This assumption holds for many models such as linear-in-parameter model class $\mathcal{F}=\left\{ f(\bx) = \w^\top \boldsymbol{\phi}(\bx) \right\}$.

Partially based on \cite{bao2018classification}, we have estimation error bounds for SU, DU and SD classification as follows.
\begin{theorem}
\label{theo:est}
Let $R(f)=\E[\ell(f(\bx), y)]$ be a classification risk for function $f$, $f^*$ be its minimizer 
and $\hat{f}_{\rm SU}, \hat{f}_{\rm DU}, \hat{f}_{\rm SD}$ be minimizers of empirical SU, DU, SD risk, respectively. 
Assume the the loss function $\ell$ is $\rho$-$Lipschitz$ function with respect to the first argument ($0 < \rho < \infty$), 
and all functions in the model class $\mathcal{F}$ are bounded, 
i.e., there exists a constant $C_b$ such that $\| f \| \leq C_b$ for any $f \in \mathcal{F}$.
Let $C_\ell \defeq \sup_{t \in \{\pm1 \}} \ell(C_b, t)$. 
For any $\delta > 0$, all of the following inequalities hold with probability at least $1-\delta$:
\begin{align}
    & R(\hat{f}_\SU) - R(f^*) \leq C_{\mathcal{F, \ell, \delta}}\left(\frac{2 \pi_\S}{\sqrt{2 n_\S}} + \frac{1}{\sqrt{n_\U}}\right), \\
    & R(\hat{f}_\DU) - R(f^*) \leq C_{\mathcal{F, \ell, \delta}}\left(\frac{2 \pi_\D}{\sqrt{2 n_\D}} + \frac{1}{\sqrt{n_\U}}\right), \\
    & R(\hat{f}_\SD) - R(f^*) \leq C_{\mathcal{F, \ell, \delta}}\left(\frac{\pi_\S}{\sqrt{2 n_\S}} + \frac{\pi_\D}{\sqrt{2 n_\D}}\right),
\end{align}
where
\begin{align}
 & C_{\mathcal{F, \ell, \delta}} = \frac{1}{|\pi_+ - \pi_-|} \left (4\rho C_{\mathcal{F}} + \sqrt{2 C_{\ell}^2 \log \frac{8}{\delta}} \right).
\end{align}
\end{theorem}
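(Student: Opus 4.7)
The plan is to follow the standard empirical-risk-minimization template, applied separately to each of the three unbiased risk estimators. For any $\bullet \in \{\SU, \DU, \SD\}$, using that $R_\bullet \equiv R$ by Proposition~\ref{theo:su} and Theorem~\ref{theo:dusd_risk}, together with the optimality of $\hat{f}_\bullet$ for $\widehat{R}_\bullet$ over $\mathcal{F}$, yields the elementary decomposition
\begin{equation*}
R(\hat{f}_\bullet) - R(f^*) \leq 2 \sup_{f \in \mathcal{F}} \bigl| \widehat{R}_\bullet(f) - R_\bullet(f) \bigr|,
\end{equation*}
so the task reduces to a high-probability uniform deviation bound for each $\bullet$.

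Next I would apply McDiarmid's bounded-differences inequality to $\sup_f |\widehat{R}_\bullet(f) - R_\bullet(f)|$, treating each pair in $\mathfrak{D}_\S$ or $\mathfrak{D}_\D$ as a single unit and each point in $\mathfrak{D}_\U$ likewise. The bounded-difference constants come from uniform bounds of order $C_\ell/|\pi_+ - \pi_-|$ on $\mathcal{L}$ and $\widetilde{\mathcal{L}}$, inherited from $|\ell(z,t)| \leq C_\ell$ on $\{\|f\| \leq C_b\}$ and the definitions of the auxiliary losses. A union bound over the (at most two) sample sources involved in each risk then produces the $\sqrt{2 C_\ell^2 \log(8/\delta)}/|\pi_+ - \pi_-|$ piece of $C_{\mathcal{F},\ell,\delta}$.

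The remaining expectation $\mathbf{E} \sup_f |\widehat{R}_\bullet(f) - R_\bullet(f)|$ is handled by symmetrization, reducing it to expected Rademacher complexities of the loss-composed classes indexed by each sample source. The key tool is Talagrand's contraction lemma: because $\mathcal{L}(\cdot, t)$ and $\widetilde{\mathcal{L}}(\cdot)$ are linear combinations of $\ell(\cdot, \pm 1)$ whose coefficient absolute values sum to at most $2/|\pi_+ - \pi_-|$, each is $O(\rho / |\pi_+ - \pi_-|)$-Lipschitz in $z$, and contraction strips the losses off at that cost. For the pair-level contributions, the symmetrized sum $\tfrac{1}{2n_\S} \sum_i \sigma_i [g(X_i) + g(X'_i)]$ is split into two Rademacher averages, each a Rademacher complexity of $\mathcal{F}$ under the marginal of $p_\S$ (respectively $p_\D$) of size $n_\S$; plugging in the assumption $\mathfrak{R}(\mathcal{F}; n, \mu) \leq C_\mathcal{F}/\sqrt{n}$ yields the $1/\sqrt{2 n_\S}$ (resp.\ $1/\sqrt{2 n_\D}$) scaling, while the external coefficients $\pi_\S$ or $\pi_\D$ survive from the risk representations. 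Summing the two sources and combining with the McDiarmid term produces each of the three stated bounds.

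The principal obstacle is careful bookkeeping rather than any deep new idea. Two points deserve special care: first, the precise Lipschitz and sup-norm constants of $\mathcal{L}$ and $\widetilde{\mathcal{L}}$ in terms of $\rho$, $C_\ell$, and $|\pi_+ - \pi_-|$, since these propagate into every Rademacher and concentration term and are the reason the factor $1/|\pi_+ - \pi_-|$ cleanly extracts out of $C_{\mathcal{F},\ell,\delta}$. Second, within any pair in $\mathfrak{D}_\S$ or $\mathfrak{D}_\D$ the two components are not independent (they are coupled through the latent event $y = y'$ or $y \neq y'$), so symmetrization must be performed at the pair level before the formal split into two single-sample Rademacher sums; once split, the two components share the common marginal and the assumption on $\mathfrak{R}(\mathcal{F}; n, \mu)$ applies uniformly, so the three cases of the theorem follow from the same skeleton, differing only in which sample sources appear (the SD bound lacks any $1/\sqrt{n_\U}$ contribution).
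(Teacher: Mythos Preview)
Your proposal is correct and follows the same standard ERM--Rademacher skeleton as the paper: the identical decomposition $R(\hat f_\bullet)-R(f^*)\le 2\sup_f|\widehat R_\bullet-R_\bullet|$, concentration plus symmetrization for the uniform deviation, and Talagrand contraction to pass from the composite losses back to $\mathfrak R(\mathcal F;n,\mu)\le C_{\mathcal F}/\sqrt n$.

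The paper's execution differs from yours in two places worth flagging. First, instead of symmetrizing at the pair level, the paper introduces the pointwise marginals $\widetilde p_\S,\widetilde p_\D$ (the marginals of $p_\S,p_\D$ in one coordinate) and rewrites each risk as an expectation over a \emph{single} point, then treats the flattened $2n_\S$ (resp.\ $2n_\D$) points as i.i.d.\ samples from $\widetilde p_\S$ (resp.\ $\widetilde p_\D$) and applies the textbook uniform-deviation lemma directly with sample size $2n_\S$; this is exactly the step whose subtlety you identify, and your pair-level symmetrization followed by the split into two $n_\S$-sample Rademacher averages is the more careful way to justify it. Second, rather than bounding $\widetilde{\mathcal L}$ and $\mathcal L(\cdot,t)$ via a single Lipschitz constant, the paper first decomposes each into its $\ell(\cdot,+1)$ and $\ell(\cdot,-1)$ summands and applies the uniform-deviation lemma separately to each, which is why four events (two sources $\times$ two signs) are union-bounded and the constant carries $\log(8/\delta)$ rather than the $\log(4/\delta)$ that your ``two sample sources'' remark would suggest. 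Aside from these bookkeeping differences, the two arguments are the same.
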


\subsection{Comparison of SD, SU, and DU Bounds}
\label{sec:theo_sudusd}
Here, we compare SU, DU, and SD algorithms from the point of view of estimation error bounds.
Under the generation process of similar and dissimilar pairs in Eq.\,\eqref{eq:gen_process_sd},
we have
\begin{corollary}
Suppose similar and dissimilar pairs follow the generation process in Eq.\,\eqref{eq:gen_process_sd}.
We denote estimation error bounds for SD, SU, and DU in Theorem~\ref{theo:est} by $V_{\rm SD}$, $V_{\rm SU}$, and $V_{\rm DU}$, respectively.
Then, $V_\DU \leq V_\SU$ and $V_\SD \leq V_\SU$ hold 
with the probability at least $1-\exp(-c n_\SD)$ for some constant $c>0$.
\end{corollary}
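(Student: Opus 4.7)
The plan is to observe that under the generation process \eqref{eq:gen_process_sd} the pair-level labels $s_i$ are i.i.d.\ Bernoulli, so $n_\S \sim \mathrm{Bin}(n_\SD, \pi_\S)$ and $n_\D = n_\SD - n_\S$. Thus by Hoeffding's inequality, for any $\epsilon > 0$,
\begin{equation*}
  \Pr\!\left(\left|\tfrac{n_\S}{n_\SD} - \pi_\S\right| > \epsilon\right) \le 2\exp(-2 n_\SD \epsilon^2),
\end{equation*}
and the same bound controls $n_\D/n_\SD$. This is the only randomness beyond what is already absorbed in Theorem~\ref{theo:est}; everything else in the comparison between $V_\SU$, $V_\DU$, $V_\SD$ is deterministic in $n_\S, n_\D, n_\U$.

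Next I would reduce both inequalities to sign conditions. After cancelling the common factor $C_{\mathcal{F},\ell,\delta}$, we have
\begin{align*}
  V_\SU - V_\DU &= \tfrac{2\pi_\S}{\sqrt{2 n_\S}} - \tfrac{2\pi_\D}{\sqrt{2 n_\D}}, \\
  V_\SU - V_\SD &= \tfrac{\pi_\S}{\sqrt{2 n_\S}} - \tfrac{\pi_\D}{\sqrt{2 n_\D}} + \tfrac{1}{\sqrt{n_\U}}.
\end{align*}
At the expected counts $n_\S = \pi_\S n_\SD$, $n_\D = \pi_\D n_\SD$, the two leading differences collapse to positive multiples of $\sqrt{\pi_\S} - \sqrt{\pi_\D}$, which is strictly positive because
\begin{equation*}
  \pi_\S - \pi_\D = (\pi_+^2 + \pi_-^2) - 2\pi_+\pi_- = (\pi_+ - \pi_-)^2 > 0,
\end{equation*}
the non-degeneracy $\pi_+ \ne \pi_-$ being forced by the finiteness of $C_{\mathcal{F},\ell,\delta}$. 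So at the population level both gaps are bounded below by an explicit constant depending only on $\pi_+$.

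Finally, I would pick a small constant perturbation, e.g.\ $\epsilon := \tfrac{1}{4}\min\{\pi_\S, \pi_\D\}$, and work on the event $\mathcal{E}_\epsilon := \{|n_\S/n_\SD - \pi_\S| \le \epsilon\}$, on which $n_\S \in [\tfrac{3}{4}\pi_\S n_\SD, \tfrac{5}{4}\pi_\S n_\SD]$ and analogously for $n_\D$. Plugging these two-sided bounds into the displayed differences and using monotonicity of $x \mapsto 1/\sqrt{x}$, a short direct calculation shows the population-level positive ``signal'' $\sqrt{\pi_\S} - \sqrt{\pi_\D}$ is not wiped out by the multiplicative perturbation, so both $V_\SU - V_\DU \ge 0$ and $V_\SU - V_\SD \ge 0$ hold on $\mathcal{E}_\epsilon$. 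Hoeffding then gives $\Pr(\mathcal{E}_\epsilon) \ge 1 - 2\exp(-2n_\SD \epsilon^2) \ge 1 - \exp(-c n_\SD)$ for a suitable $c > 0$ depending only on $\pi_+$, which is the desired probability bound.

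The routine-but-delicate step is the last one: checking that the strict ordering $\sqrt{\pi_\S} > \sqrt{\pi_\D}$ really survives once $n_\S, n_\D$ are perturbed inside the nonlinear functions $\pi/\sqrt{n}$. This is the only real obstacle; it can be handled either by a one-line mean-value argument or by verifying the inequality separately at the four corner values of $(n_\S, n_\D)$ on $\mathcal{E}_\epsilon$. One should also remark that for $V_\SU - V_\SD$ the nonnegative term $1/\sqrt{n_\U}$ only helps, so the conclusion does not degrade when $n_\U$ is small.
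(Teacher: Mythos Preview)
Your reduction of both comparisons to the single sign condition $\pi_\S/\sqrt{n_\S} \ge \pi_\D/\sqrt{n_\D}$, together with the remark that the extra $1/\sqrt{n_\U}$ term in $V_\SU - V_\SD$ only helps, is exactly how the paper begins. The two arguments part ways in how that sign condition is certified. The paper does \emph{not} run a perturbation-around-the-mean argument: it simply rewrites $\pi_\S/\sqrt{n_\S} > \pi_\D/\sqrt{n_\D}$ (using $n_\S+n_\D=n_\SD$) as the linear threshold
\[
n_\D \;>\; \frac{\pi_\D^{2}\,n_\SD}{\pi_\S^{2}+\pi_\D^{2}},
\]
and then applies a one-sided Chernoff bound to $n_\D\sim\mathrm{Bin}(n_\SD,\pi_\D)$ directly, obtaining the explicit exponent $c=\dfrac{\pi_\D}{2(1-\pi_\D)}\Bigl(1-\dfrac{\pi_\D}{\pi_\S^{2}+\pi_\D^{2}}\Bigr)^{2}$ with no case analysis at all.

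Your Hoeffding-plus-stability route is sound in principle, but the ``short direct calculation'' you defer is not as routine as you suggest, and your illustrative choice $\epsilon=\tfrac14\min\{\pi_\S,\pi_\D\}=\pi_\D/4$ actually fails over a wide range of priors. At the worst corner ($n_\D$ minimal, $n_\S$ maximal) the requirement becomes $\tfrac34\pi_\S^{2}\ge \pi_\D\pi_\S+\tfrac14\pi_\D^{2}$, equivalently $6\pi_\D^{2}-10\pi_\D+3\ge 0$, which is violated for all $\pi_\D\in(0.39,0.5)$, i.e.\ for every $\pi_+\in(0.27,0.73)\setminus\{1/2\}$ roughly. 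The gap between the mean $\pi_\D$ and the threshold $\pi_\D^{2}/(\pi_\S^{2}+\pi_\D^{2})$ equals $\pi_\S\pi_\D(\pi_\S-\pi_\D)/(\pi_\S^{2}+\pi_\D^{2})$, so any working $\epsilon$ must scale with $\pi_\S-\pi_\D=(\pi_+-\pi_-)^{2}$, not with $\pi_\D$. Once you make that correction you have essentially rederived the paper's threshold, at which point the direct Chernoff argument is both shorter and gives the constant explicitly. (A minor slip: with $\epsilon=\pi_\D/4$ the interval for $n_\S/n_\SD$ is $[\pi_\S-\pi_\D/4,\pi_\S+\pi_\D/4]$, not $[\tfrac34\pi_\S,\tfrac54\pi_\S]$.)
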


\begin{proof}
If ${\pi_\S} / \sqrt{2 n_\S} > {\pi_\D} / \sqrt{2 n_\D}$ holds, then
\begin{equation*}
    \frac{V_{\rm SU} - C_{\mathcal{F, \ell, \delta}}/{\sqrt{n_\U}}}{V_{\rm DU} - {C_{\mathcal{F, \ell, \delta}}} / {\sqrt{n_\U}}}
    = \frac{\pi_\S / \sqrt{2 n_\S}}{\pi_\D / \sqrt{2 n_\D}}
    > 1,
\end{equation*}
and
\begin{equation*}
V_\SU - V_\SD
= C_{\mathcal{F, \ell, \delta}}\left( \frac{\pi_\S}{\sqrt{2 n_\S}} - \frac{\pi_\D}{\sqrt{2 n_\D}} + \frac{1}{\sqrt{n_\U}}\right)
> 0.
\end{equation*}
These two inequalities indicate $V_\DU \leq V_\SU$ and $V_\SD \leq V_\SU$, respectively. \\
Since we assume the generation process in Eq.\eqref{eq:gen_process_sd},
the class of each pair (i.e., similar or dissimilar) follows a Bernoulli distribution. 
Therefore, the number of pairs in each class follows a binomial distribution, namely,
$n_\D \sim Binomial(n_\SD, \pi_\D)$.
By using Chernoff's inequality in \cite{okamoto1959some}, we have
\begin{equation*}
\begin{split}
& p\left(\frac{\pi_\S}{\sqrt{2 n_\S}} \leq \frac{\pi_\D}{\sqrt{2 n_\D}}\right)
 = p\left(n_\D \leq \frac{ n_\SD \pi_\D^2}{\pi_\S^2 + \pi_\D^2}\right) 
  \leq \exp \left(- \frac{n_\SD \pi_\D}{2(1-\pi_\D)} \left( 1 - \frac{\pi_\D}{\pi_\S^2 + \pi_\D^2} \right)^2 \right).
\end{split}
\end{equation*}
Therefore, 
\begin{equation*}
\begin{split}
    & p(V_\DU \leq V_\SU \land V_\SD \leq V_\SU)
     \geq 1 - \exp \left(- \frac{n_\SD \pi_\D}{2(1-\pi_\D)} \left( 1 - \frac{\pi_\D}{\pi_\S^2 + \pi_\D^2} \right)^2 \right).
\end{split}
\end{equation*}
\end{proof}

From above discussion, 
when $n_\SD$ is sufficiently large, 
we have $V_\SD \leq V_\DU \leq V_\SU$ or $V_\DU \leq V_\SD \leq V_\SU$ with high probability.
Thus, we claim that the best pairwise combination for SDU algorithm is SDDU.

\subsection{Estimation Error Bounds for SDU}
\label{sec:bounds_sdu}
Now we consider the estimation error bound for SDU classification.
With the same technique in Theorem~\ref{theo:est}, we have the following theorem.
\begin{theorem}
\label{theo:est_sdu}
Let $R(f)=\E[\ell(f(\bx), y)]$ be a classification risk for function $f$, $f^*$ be its minimizer 
and $\hat{f}_{\rm SDU}$ be a minimizer of 
the empirical risk $\widehat{R}_{\rm SDU}^{\boldsymbol{\gamma}}$ in Eq.\,\eqref{eq:emp_risk_sdu}.
Assume the the loss function $\ell$ is $\rho$-$Lipschitz$ function with respect to the first argument ($0 < \rho < \infty$), 
and all functions in the model class $\mathcal{F}$ are bounded, 
i.e., there exists an constant $C_b$ such that $\| f \| \leq C_b$ for any $f \in \mathcal{F}$.
Let $C_\ell \defeq \sup_{t \in \{\pm1 \}} \ell(C_b, t)$. 
For any $\delta > 0$, with probability at least $1-\delta$,
\begin{equation}
\begin{split}
     R(\hat{f}_{\rm SDU}) - R(f^*) 
    & \leq C'_{\mathcal{F, \ell, \delta}}\left( 
    (2 \gamma_1 + \gamma_3) \frac{\pi_\S}{\sqrt{2 n_\S}} + (2 \gamma_2 + \gamma_3) \frac{\pi_\D}{\sqrt{2 n_\D}} \right. \\
    & \qquad\left. + (|\gamma_1 \pi_- - \gamma_2 \pi_+| + |\gamma_1 \pi_+ - \gamma_2 \pi_-|) \frac{1}{\sqrt{n_\U}}\right),
\end{split}
\end{equation}
where
\begin{equation}
  C'_{\mathcal{F, \ell, \delta}} = \frac{1}{|\pi_+ - \pi_-|} \left (4\rho C_{\mathcal{F}} + \sqrt{2 C_{\ell}^2 \log \frac{12}{\delta}} \right).
\end{equation}
\end{theorem}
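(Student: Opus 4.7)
The plan is to adapt the estimation-error analysis used for Theorem~\ref{theo:est} to the convex combination $\widehat{R}_{\rm SDU}^{\boldsymbol{\gamma}} = \gamma_1\widehat{R}_\SU + \gamma_2\widehat{R}_\DU + \gamma_3\widehat{R}_\SD$. Since each summand is an unbiased estimator of $R(f)$ and $\gamma_1+\gamma_2+\gamma_3=1$, the combined estimator is also unbiased, so the usual ERM inequality
\begin{equation*}
R(\hat{f}_{\rm SDU}) - R(f^*) \;\leq\; 2\sup_{f\in\mathcal{F}}\left|\widehat{R}_{\rm SDU}^{\boldsymbol{\gamma}}(f) - R(f)\right|
\end{equation*}
reduces the theorem to a uniform deviation bound on the right-hand side.

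The essential preliminary step is to regroup $\widehat{R}_{\rm SDU}^{\boldsymbol{\gamma}}$ by \emph{data source} rather than by sub-risk, because $\mathfrak{D}_\S$, $\mathfrak{D}_\D$, and $\mathfrak{D}_\U$ are each shared between two of the three sub-risks. Using the definitions of $\widetilde{\mathcal{L}}$ and $\mathcal{L}$ from Proposition~\ref{theo:su}, I would rewrite the aggregated per-sample contributions as linear combinations of $\ell(\cdot,+1)$ and $\ell(\cdot,-1)$. A short computation then shows that the sum of absolute values of these coefficients equals $(2\gamma_1+\gamma_3)\pi_\S/(2|\pi_+-\pi_-|)$ at each similar-pair sample, $(2\gamma_2+\gamma_3)\pi_\D/(2|\pi_+-\pi_-|)$ at each dissimilar-pair sample, and $(|\gamma_1\pi_- - \gamma_2\pi_+| + |\gamma_1\pi_+ - \gamma_2\pi_-|)/|\pi_+-\pi_-|$ at each unlabeled sample. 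Multiplied by $\rho$, these are the effective Lipschitz constants that will appear after contraction.

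Next, I would split the supremum into three pieces by data source and apply the same Rademacher template used to prove Theorem~\ref{theo:est} to each piece: McDiarmid's inequality to concentrate the supremum at its expectation, symmetrization to introduce Rademacher variables, Talagrand's contraction inequality using the Lipschitz constants identified above, and finally the assumption $\mathfrak{R}(\mathcal{F};n,\mu)\leq C_\mathcal{F}/\sqrt{n}$. A union bound over the three (two-sided) concentration events at level $\delta/3$ upgrades the $\log(8/\delta)$ of Theorem~\ref{theo:est} to $\log(12/\delta)$, producing the constant $C'_{\mathcal{F,\ell,\delta}}$. Collecting terms with the appropriate $1/\sqrt{2n_\S}$, $1/\sqrt{2n_\D}$, and $1/\sqrt{n_\U}$ factors yields exactly the bound in the statement.

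The main obstacle is the regrouping-and-Lipschitz-accounting step: because the three sub-risks overlap on $\mathfrak{D}_\S$, $\mathfrak{D}_\D$, and $\mathfrak{D}_\U$, one must aggregate the coefficients on each data source \emph{before} applying any concentration. Treating the sub-risks as statistically independent would effectively double-use samples and inflate the bound; getting the $(2\gamma_1+\gamma_3)$, $(2\gamma_2+\gamma_3)$, and $(|\gamma_1\pi_- - \gamma_2\pi_+|+|\gamma_1\pi_+ - \gamma_2\pi_-|)$ factors to match the statement requires precisely this source-wise aggregation. Once the per-source Lipschitz constants are correctly identified, the remaining steps reduce to the arguments already carried out for the individual SU, DU, and SD cases.
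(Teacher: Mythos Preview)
Your proposal is correct and follows essentially the same route as the paper: the paper's proof rewrites $R_{\rm SDU}^{\boldsymbol{\gamma}}$ by data source as a linear combination of $\ell(\cdot,+1)$ and $\ell(\cdot,-1)$ over $\widetilde{p}_\S$, $\widetilde{p}_\D$, and $p_\U$ (obtaining exactly the coefficients you describe), and then invokes the uniform deviation bounds from the proof of Theorem~\ref{theo:est} with a three-source union bound to get the $\log(12/\delta)$ constant. Your account is in fact more detailed than the paper's, which simply displays the regrouped expression and defers to Theorem~\ref{theo:est}; the only minor slip is the extra factor of $1/2$ in your stated per-source absolute-coefficient sums (e.g., the $\S$-source sum should be $(2\gamma_1+\gamma_3)\pi_\S/|\pi_+-\pi_-|$, not half that), but this does not affect the argument.
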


\section{Experiments}
\label{sec:experiments}

In this section, we experimentally evaluate the performance of our SDU algorithm
and investigate the behaviors of SU, DU and SD classification.

\subsection{Datasets}
We conducted experiments on ten benchmark datasets obtained from UCI Machine Learning Repository~\cite{lichman2013uci} and LIBSVM~\cite{chang2011libsvm}.
To convert labeled data into similar and dissimilar pairs, 
we first determined the positive prior $\pi_+$.
Then we randomly subsampled pairwise similar and dissimilar data 
following the ratio of $\pi_\S$ and $\pi_D$.
To obtain unlabeled data, we randomly picked data following the ratio of $\pi_+$ and $\pi_-$.
For all experiments, $\pi_+$ was set to 0.7.

\subsection{Common Setup}
As a classifier, we used the linear-in-input model 
$f(\bx) = \w^\top \bx + b$.
The weight of L2 regularization was chosen from $\{10^{-1}, 10^{-4}, 10^{-7}\}$.
For SDU algorithms, the combination parameter $\gamma$ was chosen from $\{0.0, 0.2, ..., 0.8, 1.0 \}$.
For hyper-parameter tuning, we used 5-fold cross-validation.
To estimate validation error, the empirical risk on SD data $\widehat{R}_\SD$ equipped with the zero-one loss $\ell(\cdot) = (\frac{1}{2}(1- {\rm sign}(\cdot)))$ was used.
In each trial, the parameters with the minimum validation error were chosen.

We used squared loss for experiments in Sec.\,\ref{sec:sudusd} and Sec.\,\ref{sec:sdsdu} and both squared and double-hinge loss for experiments in Sec.\,\ref{sec:sdu_comp}.
For experiments in Sec.\,\ref{sec:sdsdu} and Sec.\,\ref{sec:sdu_comp}, 
class prior $\pi_+$ was estimated from the number of similar and dissimilar pairs by means of Eq.\,\eqref{eq:cpe}.

\begin{figure}[bthp]
 \centering
 \subfloat[adult]{\includegraphics[scale=.45]{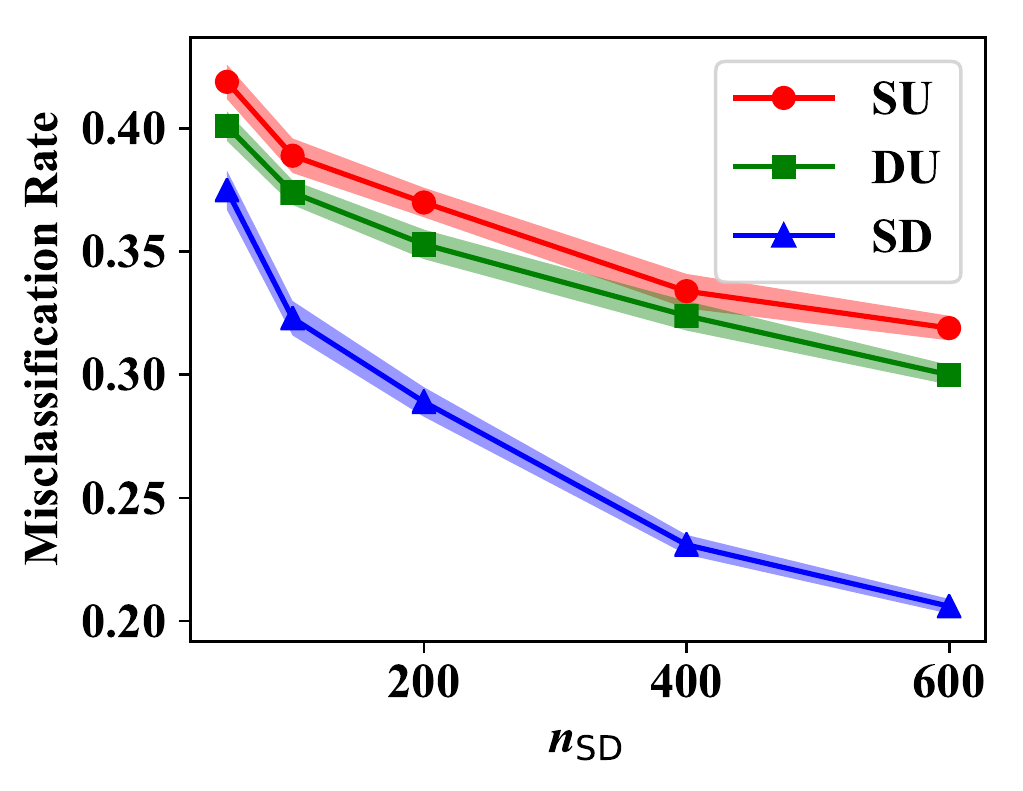}}
 \subfloat[phishing]{\includegraphics[scale=.45]{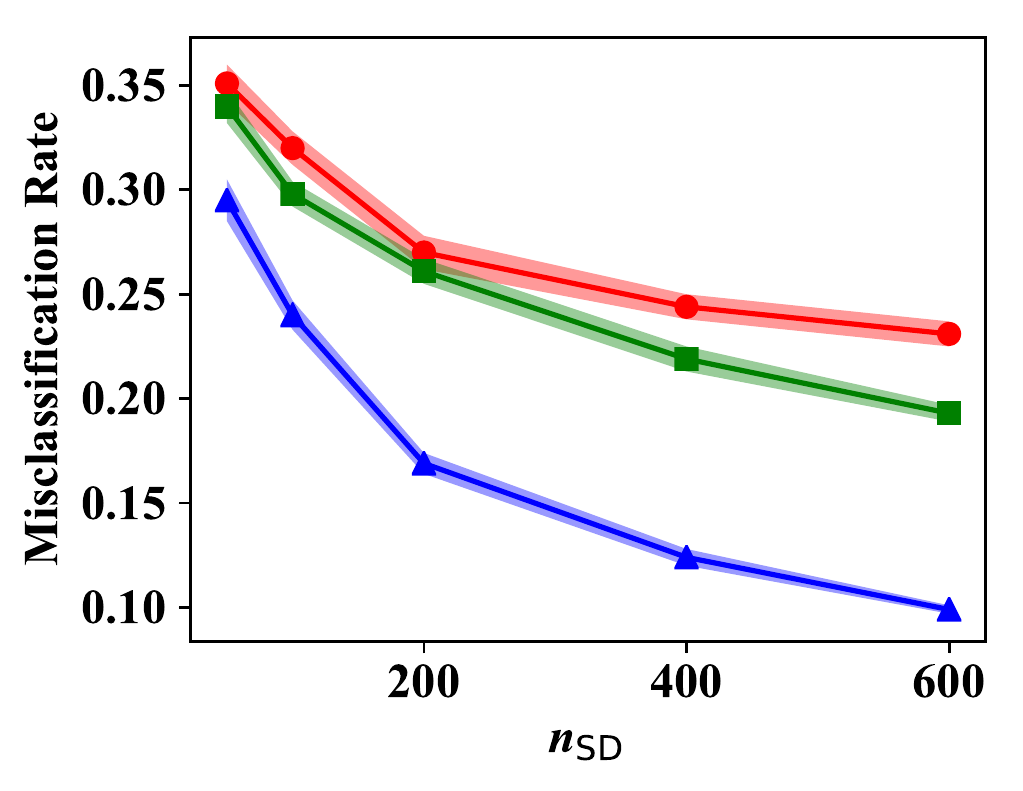}}
 \caption{
 Average misclassification rate and standard error as a function of the number of similar and dissimilar pairs over 50 trials.
 For all experiments, class prior $\pi_+$ is set to 0.7 and $n_\U$ is set to 500.
 }
 \label{fig:sudusd}
\end{figure}

\subsection{Comparison of SU, DU, and SD Performances}
\label{sec:sudusd}
We compared the performances of SU, DU and SD classification.
We set the number of unlabeled samples to 500 and the number of pairwise data to $\{50, 100, 200, 400, 600\}$.
In these experiments,
we assumed true class prior $\pi_+$ is known.
As we show the results in Fig.\,\ref{fig:sudusd}, DU and SD classification consistently outperform SU classification.
The results are consistent with our analysis in Sec.\,\ref{sec:theo_sudusd}
that SD and DU classification are likely to outperform SU classification.\footnote{Due to limited space, the magnified versions of experimental results are shown in Appendix \ref{sec_a:exp}.}

\subsection{Improvement by Unlabeled Data}
\label{sec:sdsdu}
We investigated the effect of unlabeled data on classification performance.
The number of pairwise data was set to 50.
We compared the performance of three SDU classification methods and SD classification.
As the results are shown in Fig.\,\ref{fig:sdsdu},
when the number of unlabeled data is sufficiently large, 
SDDU classification outperforms SD classification.
Furthermore, we demonstrate that SDDU classification constantly performs the best among all SDU classification.

\begin{figure}[bthp]
\centering
\subfloat[adult]{\includegraphics[scale=.45]{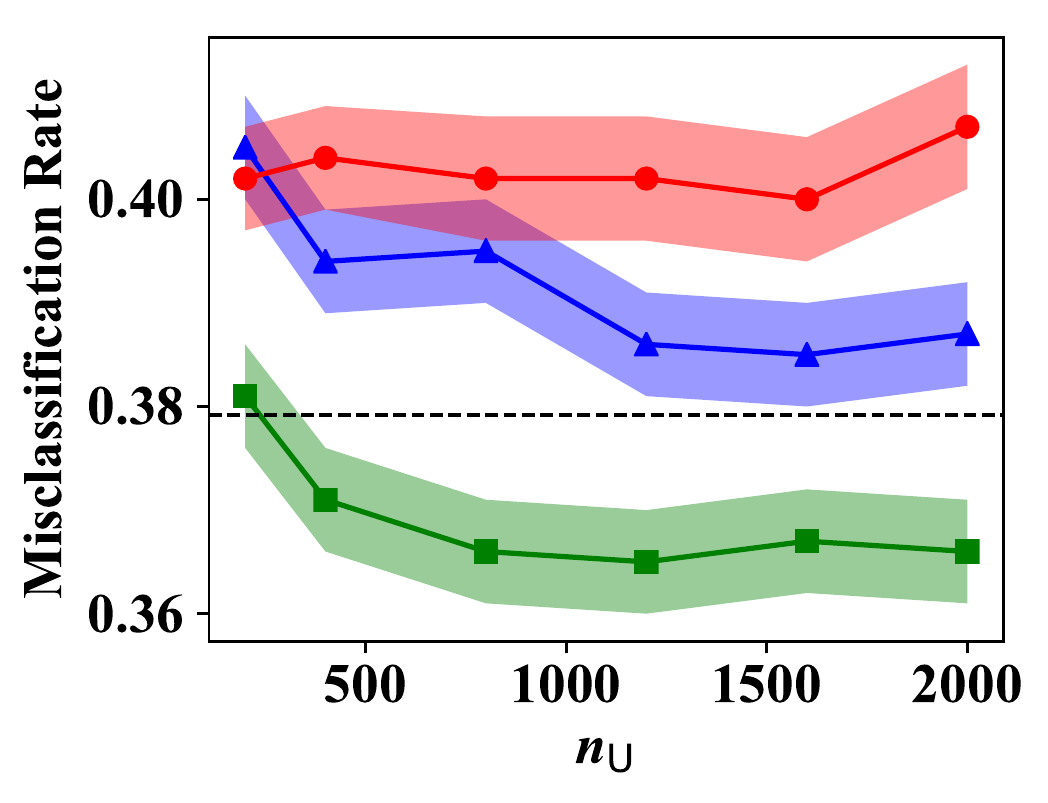}}
\subfloat[phishing]{\includegraphics[scale=.45]{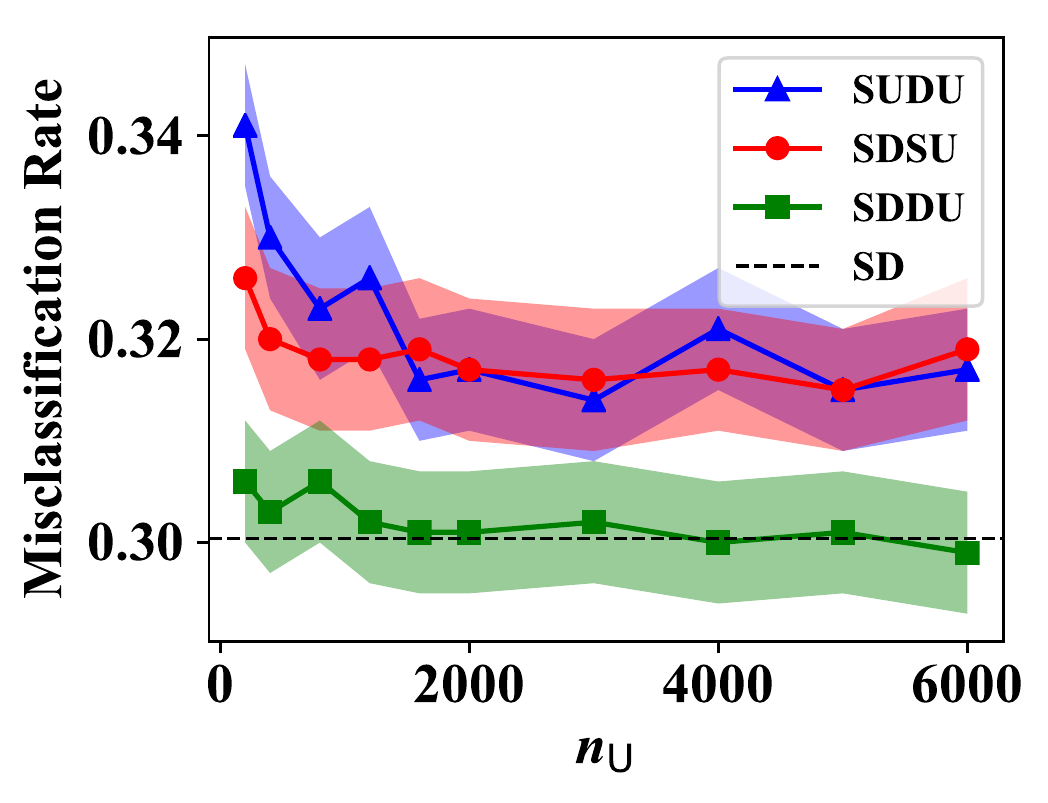}} \\
\vspace{-2mm}
\subfloat[spambase]{\includegraphics[scale=.45]{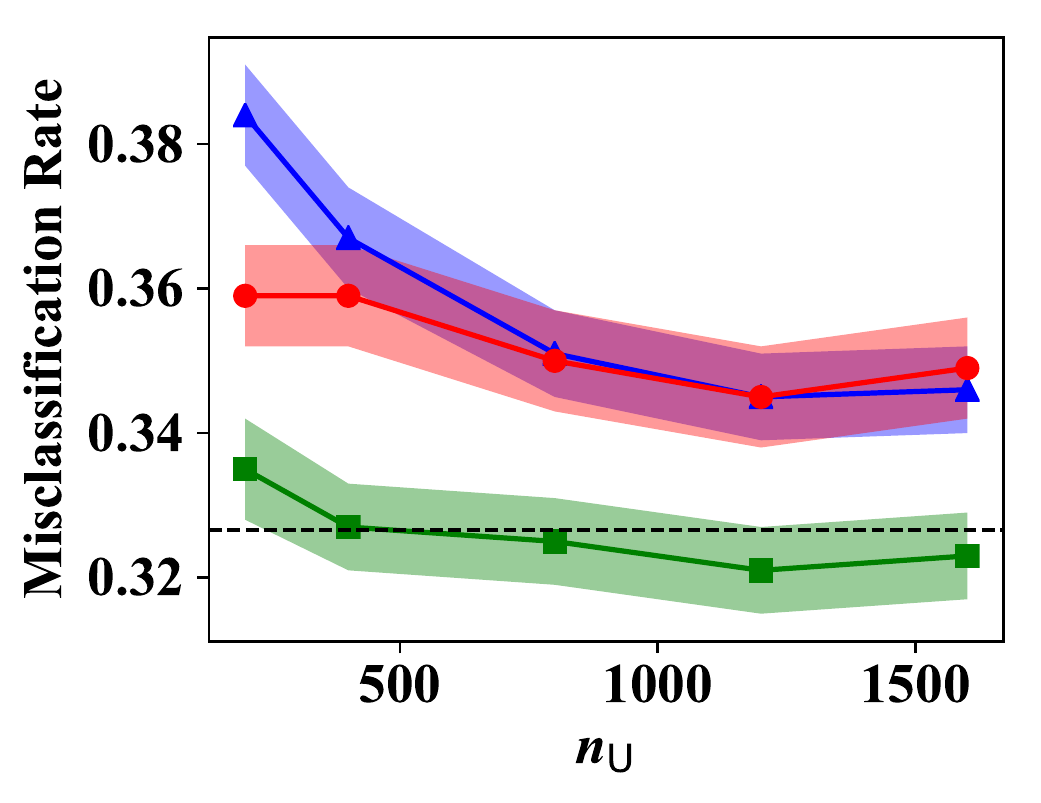}}
\subfloat[w8a]{\includegraphics[scale=.45]{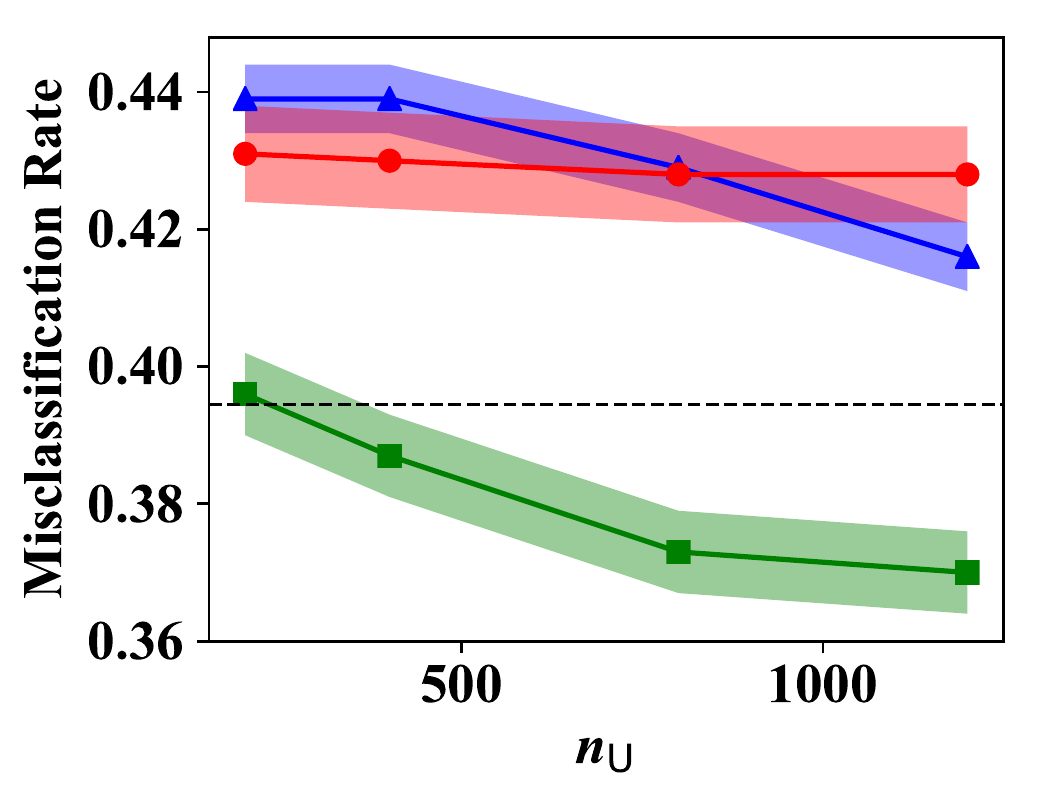}}
 \caption{
 Average misclassification rate and standard error as a function of the number of unlabeled samples over 100 trials.
 For all experiments, the class prior $\pi_+$ is set to 0.7 and $n_\SD$ is set to 50.
 The mean error rate of SD classification is drawn with a black dashed line.
 }
 \label{fig:sdsdu}
\end{figure}

\begin{table*}[bth]
\centering
\caption{
Mean accuracy and standard error on different benchmark datasets over 50 trials.
For all experiments, class prior $\pi_+$ is set to 0.7 and $n_\U$ is set to 500.
In SDU classification, we estimate class prior from $n_\S$ and $n_\D$.
For clustering algorithms, the performance is evaluated by clustering accuracy $1-\min(r, 1-r)$, where $r$ is error rate.
Bold numbers indicate outperforming methods, chosen by one-sided t-test with the significance level 5\%.
}
\scalebox{0.82}{
\label{table:SDU}
\begin{tabular}{@{\extracolsep{2pt}} cccccccc}
\hline
 &   &  \multicolumn{2}{c}{SDDU (proposed)}   &           \multicolumn{4}{c}{Baselines}    \\
  \cline{3-4} \cline{5-8}
  Dataset &  $n_\SD$ &        Squared &     Double-Hinge &          KM &         CKM &  SSP &  ITML \\
\hline
    adult &   50 &  61.9 (0.9) & \bf 77.7 (0.6) &  65.0 (0.8) &  66.6 (1.1) &  69.5 (0.3) &  62.4 (0.7) \\
    $d=123$ &  200 &  71.4 (0.7) & \bf 82.5 (0.3) &  63.3 (0.8) &  71.9 (0.9) &  69.3 (0.3) &  60.8 (0.7) \\\hline
   banana &   50 & \bf 63.9 (1.2) & \bf 63.5 (1.1) &  52.9 (0.4) &  52.7 (0.4) &  58.7 (0.7) &  53.0 (0.4) \\
   $d=2$ &  200 & \bf 66.5 (0.8) & \bf 66.9 (0.7) &  52.5 (0.2) &  52.5 (0.2) & \bf 66.5 (1.3) &  52.5 (0.2) \\\hline
   codrna &   50 & \bf 78.1 (1.1) &  68.5 (0.8) &  62.6 (0.5) &  61.5 (0.4) &  54.6 (1.0) &  62.7 (0.5) \\
   $d=8$ &  200 & \bf 87.7 (0.6) &  72.7 (0.7) &  62.8 (0.5) &  59.5 (0.5) &  53.2 (0.7) &  62.5 (0.5) \\\hline
   ijcnn1 &   50 &  64.7 (0.8) & \bf 68.8 (0.9) &  55.5 (0.6) &  54.7 (0.5) &  60.9 (0.8) &  55.8 (0.6) \\
   $d=22$ &  200 & \bf 75.1 (0.7) & \bf 76.3 (0.5) &  54.2 (0.3) &  53.1 (0.3) &  59.5 (0.8) &  54.3 (0.4) \\\hline
    magic &   50 & \bf 65.5 (0.9) & \bf 65.1 (1.0) &  52.4 (0.2) &  51.8 (0.2) &  52.7 (0.3) &  52.5 (0.2) \\
    $d=10$ &  200 & \bf 73.0 (0.6) &  71.4 (0.7) &  52.0 (0.2) &  51.7 (0.2) &  52.6 (0.3) &  52.0 (0.2) \\\hline
 phishing &   50 &  69.4 (0.8) & \bf 80.5 (0.9) &  62.6 (0.3) &  62.6 (0.3) &  68.1 (0.3) &  62.5 (0.3) \\
 $d=68$ &  200 &  81.7 (0.7) & \bf 87.0 (0.4) &  62.6 (0.3) &  62.8 (0.3) &  68.4 (0.3) &  62.6 (0.3) \\\hline
  phoneme &   50 & \bf 67.9 (0.9) & \bf 69.2 (0.9) & \bf 67.8 (0.3) & \bf 68.9 (0.5) & 66.5 (1.0) & \bf 67.8 (0.3) \\
  $d=5$ &  200 & \bf 73.5 (0.5) & \bf 74.4 (0.4) &  67.8 (0.3) &  71.0 (0.6) &  72.0 (0.7) &  67.9 (0.3) \\\hline
 spambase &   50 &  66.7 (0.8) & \bf 82.9 (0.6) &  63.7 (1.1) &  64.2 (1.1) &  70.4 (0.3) &  61.4 (1.1) \\
 $d=57$ &  200 &  77.9 (0.7) & \bf 87.5 (0.3) &  61.8 (1.2) &  70.4 (0.6) &  70.7 (0.3) &  60.6 (1.2) \\\hline
      w8a &   50 &  60.8 (0.9) & \bf 73.2 (0.9) &  69.3 (0.3) &  66.0 (0.7) &  64.2 (0.6) &  69.0 (0.3) \\
      $d=300$ &  200 &  64.1 (0.7) & \bf 80.2 (0.7) &  69.3 (0.3) &  56.3 (0.6) &  67.2 (0.7) &  68.6 (0.3) \\\hline
 waveform &   50 &  72.9 (1.0) & \bf 83.2 (1.0) &  51.5 (0.2) &  51.6 (0.2) &  53.3 (0.3) &  51.5 (0.2) \\
 $d=21$ &  200 &  83.2 (0.6) & \bf 86.7 (0.6) &  51.5 (0.2) &  51.5 (0.1) &  53.1 (0.3) &  51.5 (0.2) \\
\hline
\end{tabular}}
\end{table*}

\subsection{Comparison of SDU and Existing Methods}

\label{sec:sdu_comp}
We evaluated the performances of the proposed SDU classification with four baseline methods.
We conducted experiments on each benchmark dataset with 500 unlabeled data and 
$\{50, 200\}$ similar or dissimilar pairs in total.
Accuracy was measured in each trial with 500 test samples.
Due to limited space, 
we show only the results of SDDU classification as a representative of the proposed method.
As we can see in Table~\ref{table:SDU}, SDDU classification performs the best on many datasets.
The details of the baseline methods are described below.

\vspace{2mm}
{\bf KMeans Clustering (KM): }
Ignoring all pairwise information, 
K-means clustering algorithm~\cite{macqueen1967some} is applied to only training data.
We predicted labels of test data with learned clusters.

{\bf Constrained KMeans Clustering (CKM): }
Constrained K-means clustering~\cite{wagstaff2001constrained} is a clustering method using pairwise similar / dissimilar information as must-link / cannot-link constraints.

{\bf Semi-supervised Spectral Clustering (SSP): }
Semi-supervised spectral clustering~\cite{chen2012spectral} is a spectral clustering based method, 
where similar and dissimilar pairs are used for affinity propagation.
We set $k=5$, which is used for k-nearest-neighbors graph construction, and $\sigma^2=1$,
which is a precision parameter for similarity measurement.

{\bf Information Theoretical Metric Learning (ITML): }
Information Theoretical Metric Learning~\cite{davis2007information} is a metric learning based algorithm, where similar and dissimilar pairs used for regularizing the covariance matrix.
We used the identity matrix as prior information and a slack parameter $\gamma$ was set to 1.
For test samples prediction, 
k-means clustering was applied with the obtained metric.

For clustering algorithms,
the number of clusters $K$ was set to 2.
To evaluate the performances of k-means based clustering methods (i.e., KM, CKM, and ITML),
test samples were completely separated from training samples.
The labels of test samples are predicted based on the clusters obtained only from training samples.
For semi-supervised spectral clustering, we applied the algorithm on both train and test samples
so that we could predict for test samples.

\section{Conclusion}
In this paper, we proposed a novel weakly supervised classification algorithm, 
which is the empirical risk minimization from pairwise similar, dissimilar, and unlabeled data.
We formulated the optimization problem for SDU classification and provided practical solutions with squared and double-hinge loss.
From estimation error bound analysis, we show that the SDDU combination is the most promising for SDU classification.
Through experiments on benchmark dataset,
we confirmed that our SDU classification outperforms baseline methods.
\section*{Acknowledgments}
HB was supported by JST ACT-I Grant Number JPMJPR18UI.
IS was supported by JST CREST Grant Number JPMJCR17A1, Japan.
MS was supported by JST CREST Grant Number JPMJCR1403.
\clearpage
\bibliographystyle{abbrvnat}
\bibliography{ref.bib}
\clearpage
\appendix
\section{Proofs of Theorems}
\label{sec_a:proofs}
In this section, we give complete proofs in Secs. \ref{sec:propose} and \ref{sec:analysis}.

\subsection{Preliminaries}
\label{sec_a:pre}
For convenience,
we introduce pointwise densities $\widetilde{p}_\S(\bx)$ and $\widetilde{p}_\D(\bx)$ for similar and dissimilar data.
By marginalizing pairwise densities $p_\S(\bx, \bx')$ and $p_\D(\bx, \bx')$ by $\bx'$,
we have 
\begin{align}
    & \widetilde{p}_\S(\bx) \defeq \int p_\S(\bx, \bx') d\bx' = \frac{\pi_+^2}{\pi_+^2 + \pi_-^2} p_+(\bx) + \frac{\pi_-^2}{\pi_+^2 + \pi_-^2} p_-(\bx), \\
    & \widetilde{p}_\D(\bx) \defeq \int p_\D(\bx, \bx') d\bx' = \frac{1}{2} p_+(\bx) + \frac{1}{2} p_-(\bx).
\end{align}
Let $\widetilde{\mathfrak{D}}_\S$ be a set of pointwise samples in $\mathfrak{D}_\S$ and $\widetilde{\mathfrak{D}}_\D$ be a set of pointwise samples in $\mathfrak{D}_\D$ as well.
\begin{align}
    & \widetilde{\mathfrak{D}}_\S \defeq \{\widetilde{\bx}_{\S,i} \}_{i=1}^{2 n_\S}
    = \bigcup \{\bx_\S, \bx'_\S
    \mid (\bx_\S, \bx'_\S) \in \mathfrak{D}_\S\}, \\
    & \widetilde{\mathfrak{D}}_\D \defeq \{\widetilde{\bx}_{\D,i} \}_{i=1}^{2 n_\D}
    = \bigcup \{\bx_\D, \bx'_\D
    \mid (\bx_\D, \bx'_\D) \in \mathfrak{D}_\D\}.
\end{align}
Then, we can consider the generation process of pointwise similar/dissimilar data as
\begin{align*}
& \widetilde{\mathfrak{D}}_\S \sim \widetilde{p}_\S(\bx), \\
& \widetilde{\mathfrak{D}}_\D \sim \widetilde{p}_\D(\bx).
\end{align*}
We use above notations for the proofs of Theorems~\ref{theo:conv},~\ref{theo:est}, and \ref{theo:est_sdu}.

\subsection{Proof of Theorem~\ref{theo:dusd_risk}}
We start from an unbiased risk estimator from SU data in Proposition~\ref{theo:su}.
The classification risk is equivalently represented as:
\begin{equation}
\label{eq_a:su_risk}
    R_{\SU}(f) = \pi_\S \E_{(X,X') \sim p_\S(\boldsymbol{x}, \boldsymbol{x}')}
    \left[
            \frac{\widetilde{\mathcal{L}}(f(X)) + \widetilde{\mathcal{L}}(f(X'))}{2}
    \right] + \E_{X \sim p_\U(\boldsymbol{x})}
    \left[
            \mathcal{L}(f(X), -1)
    \right],
\end{equation}
where
\begin{align*}
    & \mathcal{L}(z, t) \defeq \frac{\pi_{+}}{\pi_{+} - \pi_{-}} \ell(z, t) - \frac{\pi_{-}}{\pi_{+} - \pi_{-}} \ell(z, -t), \\
    & \widetilde{\mathcal{L}} (z) \defeq \ell(z,+1) - \ell(z,-1).
\end{align*}
Since the pairwise density can be decomposed into density of similar pairs and dissimilar pairs, namely,
$p(\bx, \bx') = \pi_\S p_\S(\bx, \bx') + \pi_\D p_\D(\bx, \bx')$,
the expectation over $p(\bx, \bx')$ can be decomposed as follows.
\begin{equation}
\label{eq_a:usd}
    \E_{(X,X') \sim p(\boldsymbol{x}, \boldsymbol{x}')} [\cdot]
    = \pi_\S \E_{(X,X') \sim p_\S(\boldsymbol{x}, \boldsymbol{x}')}[\cdot]
    + \pi_\D \E_{(X,X') \sim p_\D(\boldsymbol{x}, \boldsymbol{x}')}[\cdot].
\end{equation}
In addition, the risk over $p_\U(\bx)$ can be equivalently represented as:
\begin{align}
\label{eq_a:urisk_decomp}
    & \E_{X \sim p_\U(\boldsymbol{x})}
    \left[
            \mathcal{L}(f(X), +1) 
    \right]
    = \E_{(X,X') \sim p(\boldsymbol{x}, \boldsymbol{x}')}
    \left[
            \frac{\mathcal{L}(f(X), +1) + \mathcal{L}(f(X'), +1)}{2} 
    \right], \\
        & \E_{X \sim p_\U(\boldsymbol{x})}
    \left[
            \mathcal{L}(f(X), -1) 
    \right]
    = \E_{(X,X') \sim p(\boldsymbol{x}, \boldsymbol{x}')}
    \left[
            \frac{\mathcal{L}(f(X), -1) + \mathcal{L}(f(X'), -1)}{2} 
    \right].
\end{align}
By applying Eq.\eqref{eq_a:usd},\eqref{eq_a:urisk_decomp},\,\eqref{eq_a:urisk_decomp} to \eqref{eq_a:su_risk},
we can derive the risk only from dissimilar and unlabeled distributions or similar and dissimilar distributions.
\begin{equation}
\begin{split}
    R_{\SU}(f) 
    & = \left( \E_{(X,X') \sim p(\boldsymbol{x}, \boldsymbol{x}')}
    \left[ \frac{\widetilde{\mathcal{L}}(f(X)) + \widetilde{\mathcal{L}}(f(X'))}{2}
    \right] \right. \\ 
    & \left. \qquad - \pi_\D \E_{(X,X') \sim p_\D(\boldsymbol{x}, \boldsymbol{x}')}
    \left[
            \frac{\widetilde{\mathcal{L}}(f(X)) + \widetilde{\mathcal{L}}(f(X'))}{2}
    \right] \right)\\
    &  \qquad + \E_{(X,X') \sim p(\boldsymbol{x}, \boldsymbol{x}')}
    \left[ \frac{\mathcal{L}(f(X), -1) + \mathcal{L}(f(X'), -1)}{2} 
    \right] \\
    & = \pi_\D \E_{(X,X') \sim p_\D(\boldsymbol{x}, \boldsymbol{x}')}
    \left[
            - \frac{\widetilde{\mathcal{L}}(f(X)) + \widetilde{\mathcal{L}}(f(X'))}{2}
    \right] \\ 
    & + \E_{(X,X') \sim p(\bx, \bx')}
    \left[
            \frac{\mathcal{L}(f(X), +1) + \mathcal{L}(f(X'), +1)}{2}
    \right] \\
    & = R_\DU(f),
\end{split}
\end{equation}
\begin{equation}
\begin{split}
    R_{\SU}(f) 
    & = \pi_\S \E_{(X,X') \sim p_\S(\boldsymbol{x}, \boldsymbol{x}')}
    \left[
            \frac{\widetilde{\mathcal{L}}(f(X)) + \widetilde{\mathcal{L}}(f(X'))}{2}
    \right] \\
    & \qquad + \left(\pi_\S \E_{(X,X') \sim p_\S(\boldsymbol{x}, \boldsymbol{x}')}
    \left[
            \frac{\mathcal{L}(f(X), -1) + \mathcal{L}(f(X'), -1)}{2} 
    \right] \right. \\
    & \qquad \qquad  \left. + \pi_\D \E_{(X,X') \sim p_\D(\boldsymbol{x}, \boldsymbol{x}')}
    \left[
            \frac{\mathcal{L}(f(X), -1) + \mathcal{L}(f(X'), -1)}{2} 
    \right]\right) \\
        & = \pi_\S \E_{(X,X') \sim p_\S(\boldsymbol{x}, \boldsymbol{x}')}
    \left[
            \frac{\mathcal{L}(f(X), +1) + \mathcal{L}(f(X'), +1)}{2} 
    \right] \\
    & \qquad + \pi_\D \E_{(X,X') \sim p_\D(\boldsymbol{x}, \boldsymbol{x}')}
    \left[
            \frac{\mathcal{L}(f(X), -1) + \mathcal{L}(f(X'), -1)}{2} 
    \right] \\
    & = R_\SD(f),
\end{split}
\end{equation}
where we use the equation $\mathcal{L}(z, +1) - \mathcal{L}(z, -1) = \widetilde{\mathcal{L}}(z)$.
Therefore, $R_\DU$ and $R_\SD$ are also unbiased estimators of the classification risk.
\qed

\subsection{Proof of Theorem~\ref{theo:conv}}
We prove this theorem based on the positive semidefiniteness of the Hessian matrix similarly to SU classification in \cite{bao2018classification}.
Since $\ell$ is a twice differentiable margin loss, there is a twice differentiable function $\psi:\R \rightarrow\R_+$
such that $\ell(z,t)=\psi(tz)$.
Here our objective function can be written as
\begin{equation}
\begin{split}
    \widehat{J}^{\boldsymbol{\gamma}}(\w) 
    & = \frac{\lambda}{2} \w^\top \w -\frac{\gamma_1 \pi_\S}{2 n_\S (\pi_+ - \pi_-)} \sum_{i=1}^{2 n_\S} \w^\top \boldsymbol{\phi}(\widetilde{\bx}_{\S,i})
    +\frac{\gamma_2 \pi_\D}{2 n_\D (\pi_+ - \pi_-)} \sum_{i=1}^{2 n_\D} \w^\top \boldsymbol{\phi}(\widetilde{\bx}_{\D,i}) \\
    & + \frac{\gamma_3 \pi_\S}{2 n_\S (\pi_+ - \pi_-)}  
    \sum_{i=1}^{2 n_\S} \left( \pi_+  \ell(\w^\top \boldsymbol{\phi}(\widetilde{\bx}_{\S,i}), + 1) - \pi_-  \ell(\w^\top \boldsymbol{\phi}(\widetilde{\bx}_{\S,i}), -1) \right) \\
    & - \frac{\gamma_3 \pi_\D}{2 n_\D (\pi_+ - \pi_-)}  
    \sum_{i=1}^{2 n_\D} \left( \pi_-  \ell(\w^\top \boldsymbol{\phi}(\widetilde{\bx}_{\D,i}), + 1) - \pi_+ \ell(\w^\top \boldsymbol{\phi}(\widetilde{\bx}_{\D,i}), -1) \right) \\
    & + \frac{1}{n_\U (\pi_+ - \pi_-)}  
    \sum_{i=1}^{n_\U} \left( (\gamma_2 \pi_+ - \gamma_1 \pi_-)  \ell(\w^\top \boldsymbol{\phi}(\bx_{\U,i}), + 1) + (\gamma_1 \pi_+ - \gamma_2 \pi_-)  \ell(\w^\top \boldsymbol{\phi}(\bx_{\U,i}), -1) \right).
\end{split}
\end{equation}
The second-order derivative of $\ell(z,t)$ with respect to $z$ can be computed as
\begin{equation}
    \frac{\partial^2 \ell(z,t)}{\partial z^2} 
    = \frac{\partial^2 \psi(tz)}{\partial z^2} 
    = t^2  \frac{\partial^2 \psi(\xi)}{\partial \xi^2} 
    = \frac{\partial^2 \psi(\xi)}{\partial \xi^2},
\end{equation}
where $\xi=tz$ is employed in the second equality and $t\in\{+1, -1\}$ is employed in the last equality.
Here, the Hessian of $\widehat{J}^{\boldsymbol{\gamma}}$ with respect to $\w$ is
\begin{equation}
\begin{split}
    \boldsymbol{H} \widehat{J}^{\boldsymbol{\gamma}}(\w)
    =  \lambda I 
    +   \frac{\partial^2 \psi(\xi)}{\partial \xi^2} & \left( \frac{\gamma_3}{2n_\S} \sum_{i=1}^{2 n_\S} \boldsymbol{\phi}(\widetilde{\bx}_{\S,i}) \boldsymbol{\phi}(\widetilde{\bx}_{\S,i})^\top
    + \frac{\gamma_3}{2n_\D} \sum_{i=1}^{2 n_\D}  \boldsymbol{\phi}(\widetilde{\bx}_{\D,i}) \boldsymbol{\phi}(\widetilde{\bx}_{\D,i})^\top \right. \\
    & \qquad \left. + \frac{\gamma_1 + \gamma_2}{n_\U} \sum_{i=1}^{n_\U} \boldsymbol{\phi}({\bx}_{\U,i}) \boldsymbol{\phi}({\bx}_{\U,i})^\top \right) \succeq 0,
\end{split}
\end{equation}
where $A \succeq 0$ means that a matrix $A$ is positive semidefinite.
Positive semidefiniteness of $\boldsymbol{H} \widehat{J}^{\boldsymbol{\gamma}}(\w)$ follows from 
$\frac{\partial^2 \psi(\xi)}{\partial \xi^2} \geq 0$ (\,$\because$~$\ell$ is convex) and 
$\boldsymbol{\phi}(\widetilde{\bx}) \boldsymbol{\phi}(\widetilde{\bx})^\top \succeq 0$.
Therefore, $\widehat{J}^{\boldsymbol{\gamma}}(\w)$ is convex with respect to $\w$.
\qed

\subsection{Proof of Theorem~\ref{theo:est}}
We apply the similar technique with SU classification to DU and SD classification.
From pointwise decomposition in Sec.\,\ref{sec_a:pre}, we have the following lemma.
\begin{lemma}
Given any function $f:\mathcal{X}\rightarrow$, we denote $R_{\rm \widetilde{S}U}$, $R_{\rm \widetilde{D}U}$, and $R_{\rm \widetilde{SD}}$ by
\begin{align}
   &  R_{\rm \widetilde{S}U} \defeq \pi_\S \E_{X \sim \widetilde{p}_\S(\boldsymbol{x})}
    \left[
            \widetilde{\mathcal{L}}(f(X))
    \right] + \E_{X \sim p_\U(\boldsymbol{x})}
    \left[
            \mathcal{L}(f(X), -1)
    \right], \\
&    R_{\rm \widetilde{D}U} \defeq \pi_\D \E_{X \sim \widetilde{p}_\D(\boldsymbol{x})}
    \left[
            - \widetilde{\mathcal{L}}(f(X))
    \right] + \E_{X \sim p_\U(\boldsymbol{x})}
    \left[
            \mathcal{L}(f(X), +1) 
    \right], \\
&   R_{\rm \widetilde{SD}}
    \defeq \pi_\S \E_{X \sim \widetilde{p}_\S(\boldsymbol{x})} \left[\mathcal{L}(f(X), +1) \right]
    + \pi_\D \E_{X \sim \widetilde{p}_\D(\boldsymbol{x})} \left[\mathcal{L}(f(X), -1) \right].
\end{align}
Then, $R_{\rm \widetilde{S}U}$, $R_{\rm \widetilde{D}U}$, and $R_{\rm \widetilde{SD}}$ are equivalent to $R_\SU$, $R_\DU$, and $R_\SD$, respectively.
\end{lemma}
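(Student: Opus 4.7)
The plan is to establish each of the three equivalences by decomposing the pairwise expectations into two pointwise terms via linearity, then invoking the symmetry of $p_\S(\bx,\bx')$ and $p_\D(\bx,\bx')$ together with Fubini's theorem to collapse each term into an expectation over the marginals $\widetilde{p}_\S(\bx)$ and $\widetilde{p}_\D(\bx)$.

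First, I would note the symmetry $p_\S(\bx,\bx') = p_\S(\bx',\bx)$ and $p_\D(\bx,\bx') = p_\D(\bx',\bx)$, which is immediate from the closed-form expressions
\[
p_\S(\bx,\bx') = \tfrac{\pi_+^2}{\pi_\S} p_+(\bx)p_+(\bx') + \tfrac{\pi_-^2}{\pi_\S} p_-(\bx)p_-(\bx'), \qquad p_\D(\bx,\bx') = \tfrac{1}{2} p_+(\bx)p_-(\bx') + \tfrac{1}{2} p_-(\bx)p_+(\bx').
\]
A direct consequence is that marginalizing out either argument yields the same density $\widetilde{p}_\S$ (resp. $\widetilde{p}_\D$) given in Section~\ref{sec_a:pre}.

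Second, for the similar term in $R_\SU$, I would split the expectation by linearity and apply Fubini to each half:
\[
\E_{(X,X') \sim p_\S}\!\left[\tfrac{\widetilde{\mathcal{L}}(f(X)) + \widetilde{\mathcal{L}}(f(X'))}{2}\right]
= \tfrac{1}{2}\!\int\!\widetilde{\mathcal{L}}(f(\bx)) \widetilde{p}_\S(\bx) d\bx + \tfrac{1}{2}\!\int\!\widetilde{\mathcal{L}}(f(\bx'))\widetilde{p}_\S(\bx') d\bx' = \E_{X \sim \widetilde{p}_\S}[\widetilde{\mathcal{L}}(f(X))].
\]
The unlabeled term of $R_\SU$ is already a pointwise expectation, so combining the two yields $R_\SU = R_{\widetilde{S}U}$. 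The exact same argument applied to $p_\D$ and the sign-flipped integrand gives $R_\DU = R_{\widetilde{D}U}$.

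Third, for $R_\SD$, I would apply the same two-term split independently to both the similar and dissimilar pairwise expectations, now with integrands $\mathcal{L}(\cdot,+1)$ and $\mathcal{L}(\cdot,-1)$ respectively, to obtain $R_\SD = R_{\widetilde{SD}}$. There is no substantive obstacle here; the entire argument is routine marginalization, and the only point requiring care is confirming the joint symmetry that permits writing a single marginal density rather than two distinct ones — without symmetry one would get $\tfrac{1}{2}(\widetilde{p}_\S^{(1)}(\bx) + \widetilde{p}_\S^{(2)}(\bx))$, which only simplifies when the two marginals coincide.
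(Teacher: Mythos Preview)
Your proposal is correct and matches the paper's approach: the paper states this lemma as an immediate consequence of the pointwise decomposition in Section~\ref{sec_a:pre} without spelling out a proof, and your argument via linearity, Fubini, and the symmetry of $p_\S$ and $p_\D$ is precisely the routine marginalization that justifies it. Your remark that symmetry is what permits identifying both marginals with a single $\widetilde{p}_\S$ (resp.\ $\widetilde{p}_\D$) is the one nontrivial observation, and it is exactly right.
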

Here, empirical versions of above risks are defined as:
\begin{align}
   &  \widehat{R}_{\rm \widetilde{S}U} 
   \defeq  \frac{\pi_\S}{2 n_\S} \sum_{i=1}^{2 n_\S} \widetilde{\mathcal{L}}(f(\widetilde{\bx}_{\S, i}))
 + \frac{1}{n_\U} \sum_{i=1}^{n_\U} \mathcal{L}(f\bx_{\U, i}), -1), \\
&    \widehat{R}_{\rm \widetilde{D}U} 
   \defeq  - \frac{\pi_\D}{2 n_\D} \sum_{i=1}^{2 n_\D} \widetilde{\mathcal{L}}(f(\widetilde{\bx}_{\D, i}))
 + \frac{1}{n_\U} \sum_{i=1}^{n_\U} \mathcal{L}(f(\bx_{\U, i}), +1), \\
&   \widehat{R}_{\rm \widetilde{SD}} 
   \defeq  \frac{\pi_\S}{2 n_\S} \sum_{i=1}^{2 n_\S} \mathcal{L}(f(\widetilde{\bx}_{\S, i}), +1)
   + \frac{\pi_\D}{2 n_\D} \sum_{i=1}^{2 n_\D} \mathcal{L}(f(\widetilde{\bx}_{\D, i}), -1).
\end{align}
Note that above empirical risks are equivalent to $\widehat{R}_\SU$, $\widehat{R}_\DU$, and $\widehat{R}_\SD$, respectively.

Now we show the uniform deviation bound, which is useful to derive estimation error bounds.
The proof can be found in the textbooks such as~\cite{mohri2018foundations}.
\begin{lemma}
\label{lemma_a:udb}
Let $Z$ be a random variable drawn from a probability distribution with density $\mu$,
$\mathcal{H}=\{h : \mathcal{Z}\rightarrow[0, M] \}(M>0)$ be a class of measurable functions, 
$\{z_i\}_{i=1}^n$ be i.i.d. samples drawn from the distribution with density $\mu$.
Then, for any $\delta > 0$, with the probability at least $1-\delta$,
\begin{equation}
    \sup_{h \in \mathcal{H}} \left| \E_{Z \sim \mu} [h(Z)] - \frac{1}{n}\sum_{i=1}^n h(z_i) \right|
    \leq 2 \mathfrak{R}(\mathcal{H}; \mu, n) + \sqrt{\frac{M^2 \log \frac{2}{\delta}}{2n}}.
\end{equation}
\end{lemma}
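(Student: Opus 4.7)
The plan is to prove this classical uniform deviation bound by combining McDiarmid's bounded differences inequality with a symmetrization argument that introduces Rademacher variables. To set things up, define the one-sided suprema $\Phi_+(z_1,\dots,z_n) \defeq \sup_{h \in \mathcal{H}} \bigl(\E_{Z \sim \mu}[h(Z)] - \frac{1}{n}\sum_{i=1}^n h(z_i)\bigr)$ and $\Phi_-(z_1,\dots,z_n) \defeq \sup_{h \in \mathcal{H}} \bigl(\frac{1}{n}\sum_{i=1}^n h(z_i) - \E_{Z \sim \mu}[h(Z)]\bigr)$, so that the quantity of interest equals $\max(\Phi_+, \Phi_-)$. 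The strategy is to concentrate each of these around its mean using McDiarmid and then upper bound each mean by twice the Rademacher complexity.

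First I would verify the bounded-differences condition: replacing any single coordinate $z_i$ by an alternative value changes $\Phi_+$ or $\Phi_-$ by at most $M/n$, because each $h$ takes values in $[0, M]$ and enters the empirical mean with weight $1/n$. McDiarmid's inequality then yields $\Pr[\Phi_\pm \geq \E \Phi_\pm + t] \leq \exp(-2 n t^2 / M^2)$ for each sign. Setting the right-hand side to $\delta/2$ and taking a union bound over the two signs gives $\max(\Phi_+, \Phi_-) \leq \max(\E \Phi_+, \E \Phi_-) + \sqrt{M^2 \log(2/\delta)/(2n)}$ with probability at least $1 - \delta$.

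Next I would bound $\E \Phi_\pm$ by the standard symmetrization trick. Introducing an independent ghost sample $Z_1',\dots,Z_n' \sim \mu$, the expectation $\E_{Z \sim \mu}[h(Z)]$ can be rewritten as $\E\bigl[\frac{1}{n}\sum_i h(Z_i')\bigr]$, and applying Jensen's inequality to pull the supremum outside the ghost expectation yields $\E \Phi_+ \leq \E \sup_h \frac{1}{n}\sum_i (h(Z_i') - h(Z_i))$. Since $(Z_i, Z_i')$ are exchangeable, multiplying the $i$-th summand by an independent Rademacher sign $\sigma_i$ leaves the joint distribution unchanged; splitting the resulting supremum via the triangle inequality and applying the definition of $\mathfrak{R}(\mathcal{H}; n, \mu)$ to each half then gives $\E \Phi_+ \leq 2 \mathfrak{R}(\mathcal{H}; n, \mu)$, and the same argument bounds $\E \Phi_-$.

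Combining the two steps yields the stated inequality. The argument is routine and I do not anticipate a genuine obstacle; the only bookkeeping points that warrant care are the factor of $2$ in front of $\mathfrak{R}$, which is the cost of dropping the ghost sample via the triangle inequality after symmetrization, and the appearance of $\log(2/\delta)$ rather than $\log(1/\delta)$, which reflects the union bound used to handle the two one-sided suprema simultaneously.
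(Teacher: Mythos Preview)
Your proposal is correct and follows the standard textbook argument. The paper does not actually supply its own proof of this lemma; it simply states that ``the proof can be found in the textbooks such as~\cite{mohri2018foundations}.'' The McDiarmid-plus-symmetrization route you outline is precisely the argument given in that reference, so there is no meaningful difference to discuss.
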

Here we can write the estimation error bound for SU classification by
\begin{equation}
\label{eq_a:udb_su}
    \begin{split}
        R(\widehat{f}_\SU) - R(f^*)
        & = R_{\SU}(\widehat{f}_\SU) - R_{\SU}(f^*) \\
        & \leq \left( R_{\SU}(\widehat{f}_\SU) - \widehat{R}_{\SU}(\widehat{f}_\SU) \right)
        + \left(\widehat{R}_{\SU}(f^*) - R_{\SU}(f^*)  \right) \\
        & \leq 2 \sup_{f \in \mathcal{F}} \left| R_{\SU}(f) - \widehat{R}_{\SU}(f) \right| \\
        & = 2 \sup_{f \in \mathcal{F}} \left| R_{\rm \widetilde{S}U}(f) - \widehat{R}_{\rm \widetilde{S}U}(f) \right| \\
        & = 2\pi_\S \sup_{f \in \mathcal{F}} 
        \left| \E_{X \sim \widetilde{p}_\S}\left[\widetilde{\mathcal{L}}(f(X))\right] - \frac{1}{2 n_\S} \sum_{i=1}^{2 n_\S}  \widetilde{\mathcal{L}}(f(\bx_{\S,i})) \right| \\
        & \qquad + 2 \sup_{f \in \mathcal{F}} \left| \E_{X \sim p_\U}\left[\mathcal{L}(f(X), -1)\right] - \frac{1}{n_\U} \sum_{i=1}^{n_\U} \mathcal{L}(f(\bx_{\U,i}), -1) \right|.
    \end{split}
\end{equation}
Similary, for DU and SD classification, we have
\begin{equation}
\label{eq_a:udb_du}
\begin{split}
         R(\widehat{f}_\DU) - R(f^*) 
    & \leq 2\pi_\D \sup_{f \in \mathcal{F}} 
        \left| \E_{X \sim \widetilde{p}_\D}\left[\widetilde{\mathcal{L}}(f(X))\right] - \frac{1}{2 n_\D} \sum_{i=1}^{2 n_\D}  \widetilde{\mathcal{L}}(f(\bx_{\D,i})) \right| \\
        & \qquad + 2 \sup_{f \in \mathcal{F}} \left| \E_{X \sim p_\U}\left[\mathcal{L}(f(X), +1)\right] - \frac{1}{n_\U} \sum_{i=1}^{n_\U} \mathcal{L}(f(\bx_{\U,i}), +1) \right|,
\end{split}
\end{equation}
\begin{equation}
\label{eq_a:udb_sd}
    \begin{split}
        R(\widehat{f}_\SD) - R(f^*) 
    & \leq 2\pi_\S \sup_{f \in \mathcal{F}} 
        \left| \E_{X \sim \widetilde{p}_\S}\left[\mathcal{L}(f(X), +1)\right] - \frac{1}{2 n_\S} \sum_{i=1}^{2 n_\S}  \mathcal{L}(f(\bx_{\S,i}), +1) \right| \\
        & \qquad + 2 \pi_\D \sup_{f \in \mathcal{F}} \left| \E_{X \sim \widetilde{p}_\D}\left[\mathcal{L}(f(X), -1)\right] - \frac{1}{2 n_\D} \sum_{i=1}^{2 n_\D} \mathcal{L}(f(\bx_{\D,i}), -1) \right|,
    \end{split}
\end{equation}

To derive bounds for each algorithm,
we derive the uniform deviation bound for $\widetilde{\mathcal{L}}(f(\cdot))$ and $\mathcal{L}(f(\cdot), \pm 1)$.
\begin{lemma}
\label{lm_a:udb_l}
Assume the the loss function $\ell$ is $\rho$-$Lipschitz$ function with respect to the first argument ($0 < \rho < \infty$), 
and all functions in the model class $\mathcal{F}$ are bounded, 
i.e., there exists a constant $C_b$ such that $\| f \| \leq C_b$ for any $f \in \mathcal{F}$.
Let $C_\ell \defeq \sup_{t \in \{\pm1 \}} \ell(C_b, t)$
and $\{\bx_i\}_{i=1}^n$ be i.i.d. samples drawn from a probability distribution with density $p$. 
For any $\delta > 0$, each of the following inequality holds with probability at least $1-\delta$.
\begin{align}
    & \sup_{f \in \mathcal{F}} \left| \E_{X \sim p}\left[\widetilde{\mathcal{L}}(f(X))\right] - \frac{1}{n} \sum_{i=1}^n  \widetilde{\mathcal{L}}(f(\bx_i)) \right|
    \leq \frac{4 \rho C_{\mathcal{F}} + \sqrt{2 C_\ell^2 \log \frac{4}{\delta}}}{|\pi_+ - \pi_-| \sqrt{n}}, \\
    & \sup_{f \in \mathcal{F}} \left| \E_{X \sim p}\left[\mathcal{L}(f(X), +1)\right] - \frac{1}{n} \sum_{i=1}^n \mathcal{L}(f(\bx_i), +1) \right|
    \leq \frac{2 \rho C_{\mathcal{F}} + \sqrt{\frac{1}{2} C_\ell^2 \log \frac{4}{\delta}}}{|\pi_+ - \pi_-| \sqrt{n}}, \\
    & \sup_{f \in \mathcal{F}} \left| \E_{X \sim p}\left[\mathcal{L}(f(X), -1)\right] - \frac{1}{n} \sum_{i=1}^n \mathcal{L}(f(\bx_i), -1) \right|
    \leq \frac{2 \rho C_{\mathcal{F}} + \sqrt{\frac{1}{2} C_\ell^2 \log \frac{4}{\delta}}}{|\pi_+ - \pi_-| \sqrt{n}}.
\end{align}
\end{lemma}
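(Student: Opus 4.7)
The plan is to prove each of the three uniform-deviation statements by the same recipe: decompose the composite loss into its two single-label components, apply Lemma~\ref{lemma_a:udb} to each of them with confidence budget $\delta/2$, use Talagrand's contraction principle to import the Rademacher bound on $\mathcal{F}$, and combine by the triangle inequality and a union bound. For the $\widetilde{\mathcal{L}}$ statement I would write
\[
    \widetilde{\mathcal{L}}(f(\cdot)) = \frac{1}{\pi_+ - \pi_-}\ell(f(\cdot), +1) - \frac{1}{\pi_+ - \pi_-}\ell(f(\cdot), -1)
\]
and treat the two associated function classes $\mathcal{H}_t = \{\frac{1}{\pi_+-\pi_-}\ell(f(\cdot), t) : f \in \mathcal{F}\}$ for $t \in \{\pm 1\}$ independently.

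For each $\mathcal{H}_t$ I need a uniform envelope and a Rademacher bound. The envelope is $\|h\|_\infty \leq \frac{C_\ell}{|\pi_+-\pi_-|}$, which follows from $\|f\|_\infty \leq C_b$ together with the definition of $C_\ell$. For the Rademacher bound, since $\ell$ is $\rho$-Lipschitz in its first argument, the map $z \mapsto \frac{1}{\pi_+-\pi_-}\ell(z,t)$ is $\frac{\rho}{|\pi_+-\pi_-|}$-Lipschitz, so Talagrand's contraction principle combined with the model-class assumption in Eq.\,\eqref{eq:assumption} gives
\[
    \mathfrak{R}(\mathcal{H}_t; n, p) \leq \frac{\rho}{|\pi_+-\pi_-|}\mathfrak{R}(\mathcal{F}; n, p) \leq \frac{\rho C_{\mathcal{F}}}{|\pi_+-\pi_-|\sqrt{n}}.
\]
Feeding both ingredients into Lemma~\ref{lemma_a:udb} at level $\delta/2$, each of the two terms contributes $\frac{2\rho C_{\mathcal{F}}}{|\pi_+-\pi_-|\sqrt{n}} + \frac{1}{|\pi_+-\pi_-|}\sqrt{\frac{C_\ell^2\log(4/\delta)}{2n}}$, and summing these via the triangle inequality and a union bound produces exactly the stated $(4\rho C_{\mathcal{F}} + \sqrt{2C_\ell^2\log(4/\delta)})/(|\pi_+-\pi_-|\sqrt{n})$.

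For $\mathcal{L}(f(\cdot),+1)$ I would run the identical argument with the decomposition into $\frac{\pi_+}{\pi_+-\pi_-}\ell(f(\cdot),+1)$ and $\frac{\pi_-}{\pi_+-\pi_-}\ell(f(\cdot),-1)$; the envelopes and Lipschitz constants now pick up extra factors of $\pi_+$ and $\pi_-$, respectively. When the two resulting deviation bounds are added, these coefficients collapse via $\pi_+ + \pi_- = 1$, and this collapse is precisely what produces the clean $2\rho C_{\mathcal{F}} + \sqrt{\frac{1}{2}C_\ell^2\log(4/\delta)}$ in the statement. The case $\mathcal{L}(f(\cdot),-1)$ is handled by swapping the roles of the two $\ell(\cdot,\pm 1)$ summands, giving an identical bound by symmetry of the coefficient magnitudes.

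The main obstacle is bookkeeping rather than genuine mathematical difficulty. I must use a version of the contraction principle that is consistent with the $2\mathfrak{R}(\mathcal{H}; \mu, n)$ factor already present in Lemma~\ref{lemma_a:udb} so that no redundant factor of $2$ slips in, split the confidence budget as $\delta/2 + \delta/2$ so that the union bound yields $\log(4/\delta)$ rather than $\log(2/\delta)$, and invoke $\pi_+ + \pi_- = 1$ at exactly the right moment to collapse the $\pi_\pm$-dependent constants in the $\mathcal{L}(\cdot, \pm 1)$ bounds. Once those constants line up, the rest is routine algebra inherited from Lemma~\ref{lemma_a:udb}.
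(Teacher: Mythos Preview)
Your proposal is correct and matches the paper's proof essentially line for line: decompose $\widetilde{\mathcal{L}}$ (resp.\ $\mathcal{L}(\cdot,\pm1)$) into its two $\ell(\cdot,\pm1)$ components, apply Lemma~\ref{lemma_a:udb} to each at confidence $\delta/2$, invoke Talagrand's contraction together with the assumption $\mathfrak{R}(\mathcal{F};n,\mu)\leq C_{\mathcal{F}}/\sqrt{n}$, and union-bound. Your explicit observation that $\pi_+ + \pi_- = 1$ collapses the coefficients in the $\mathcal{L}(\cdot,\pm1)$ bounds is exactly the mechanism the paper has in mind when it says those cases ``can be proven similarly.''
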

\begin{proof}
\begin{equation}
    \begin{split}
        & \sup_{f \in \mathcal{F}} \left| \E_{X \sim p}\left[\widetilde{\mathcal{L}}(f(X))\right] - \frac{1}{n} \sum_{i=1}^n  \widetilde{\mathcal{L}}(f(\bx_i)) \right| \\
        & \leq \frac{1}{|\pi_+ - \pi_-|} \underbrace{ \sup_{f \in \mathcal{F}} \left| \E_{X \sim p}\left[\ell(f(X),+1)\right] - \frac{1}{n} \sum_{i=1}^n  \ell(f(\bx_i),+1) \right|}_{\text{with the probability at least $1-\delta/2$}}  \\
        & \qquad + \frac{1}{|\pi_+ - \pi_-|} \underbrace{\sup_{f \in \mathcal{F}} \left| \E_{X \sim p}\left[\ell(f(X),-1)\right] - \frac{1}{n} \sum_{i=1}^n  \ell(f(\bx_i),-1) \right|}_{\text{with the probability at least $1-\delta/2$}}  \\
        & \leq \frac{1}{|\pi_+ - \pi_-|}\left\{ 4 \mathfrak{R}(\ell \circ \mathcal{F}; n, p) 
            + \sqrt{\frac{2 C_{\ell}^2 \log \frac{4}{\delta}}{n}} \right\},
    \end{split}
\end{equation}
where $\ell \circ \mathcal{F}$ in the last line means the class $\{\ell \circ f \mid f \in \mathcal{F} \}$.
The last inequality holds from Lemma~\ref{lemma_a:udb}, with the probability $1-\frac{\delta}{2}$ for each term.
By applying Taragrand's lemma,
\begin{equation}
\mathfrak{R}(\ell \circ \mathcal{F}; n, p) \leq \rho \mathfrak{R}(\mathcal{F}; n, p).
\end{equation}
With the assumption in Eq.\,\eqref{eq:assumption}, we obtain
\begin{equation}
\begin{split}
          \sup_{f \in \mathcal{F}} \left| \E_{X \sim p}\left[\widetilde{\mathcal{L}}(f(X))\right] - \frac{1}{n} \sum_{i=1}^n  \widetilde{\mathcal{L}}(f(\bx_i)) \right| 
         & \leq \frac{1}{|\pi_+ - \pi_-|}\left\{4\rho\frac{C_{\mathcal{F}}}{\sqrt{n}} + \sqrt{\frac{2 C_{\ell}^2 \log \frac{4}{\delta}}{n}} \right\}, \\
      & = \frac{4\rho C_{\mathcal{F}} + \sqrt{2 C_{\ell}^2 \log \frac{4}{\delta}}}{|\pi_+ - \pi_-|\sqrt{n}}.
\end{split}
\end{equation}
The bounds for $\mathcal{L}(f(\cdot), \pm 1)$ can be proven similarly to $\widetilde{\mathcal{L}}(f(\cdot))$.
\end{proof}

By combining Lemma~\ref{lm_a:udb_l} and Eqs.\,\eqref{eq_a:udb_su}, \eqref{eq_a:udb_du}, \eqref{eq_a:udb_sd},
we complete the proof of this theorem.
\qed

\subsection{Proof of Theorem~\ref{theo:est_sdu}}
Let $R_{\rm SDU}^{\boldsymbol{\gamma}}(f) \defeq \gamma_1 R_\SU(f) + \gamma_2 R_\DU(f) + \gamma_3 R_\SD(f)$.
We can rewrite this risk as follows.
\begin{equation}
\begin{split}
    R_{\rm SDU}^{\boldsymbol{\gamma}}(f)
    & = \frac{\pi_\S}{\pi_+ - \pi_-} \E_{X \sim \widetilde{p}_\S(\boldsymbol{x})}
    \left[ (\gamma_1 + \gamma_3 \pi_+)\ell(f(X), + 1) - (\gamma_1 + \gamma_3 \pi_-) \ell(f(X), -1)\right] \\
    & +\frac{\pi_\D}{\pi_+ - \pi_-} \E_{X \sim \widetilde{p}_\S(\boldsymbol{x})}
    \left[ -(\gamma_2 + \gamma_3 \pi_-)\ell(f(X), + 1) + (\gamma_2 + \gamma_3 \pi_+) \ell(f(X), -1)\right] \\
    & + \frac{1}{\pi_+ - \pi_-} \E_{X \sim p_\U(\boldsymbol{x})}
    \left[ (\gamma_2 \pi_+ - \gamma_1 \pi_-)\ell(f(X), + 1) + (\gamma_1 \pi_+ - \gamma_2 \pi_-) \ell(f(X), -1)\right].
\end{split}
\end{equation}
Applying the uniform deviation bounds for each in the same way with Theorem~\ref{theo:est}, this theorem can be proven.
\qed
\section{Derivation of Optimization with Double-Hinge Loss}
\label{sec_a:opt}

Suppose we use the double-hinge loss 
$\ell_{\rm DH}(z,t)=\max(-tz, \max(0, \frac{1}{2}- \frac{1}{2}tz))$.
In that case, we can rewrite the objective function in Eq.\,\eqref{eq:obj} can be represented as
\begin{equation}
\label{eq_a:sdu_obj}
\begin{split}
    \widehat{J}^{\boldsymbol{\gamma}}(\w) 
    & = \frac{\lambda}{2} \w^\top \w -\frac{\gamma_1 \pi_\S}{2 n_\S (\pi_+ - \pi_-)} \sum_{i=1}^{2 n_\S} \w^\top \phi(\widetilde{\bx}_{\S,i})
    +\frac{\gamma_2 \pi_\D}{2 n_\D (\pi_+ - \pi_-)} \sum_{i=1}^{2 n_\D} \w^\top \phi(\widetilde{\bx}_{\D,i}) \\
    & + \frac{\gamma_3 \pi_\S}{2 n_\S (\pi_+ - \pi_-)}  
    \sum_{i=1}^{2 n_\S} \left( \pi_+  \ell_{\rm DH}(\w^\top \phi(\widetilde{\bx}_{\S,i}), + 1) - \pi_-  \ell_{\rm DH}(\w^\top \phi(\widetilde{\bx}_{\S,i}), -1) \right) \\
    & - \frac{\gamma_3 \pi_\D}{2 n_\D (\pi_+ - \pi_-)}  
    \sum_{i=1}^{2 n_\D} \left( \pi_-  \ell_{\rm DH}(\w^\top \phi(\widetilde{\bx}_{\D,i}), + 1) - \pi_+  \ell_{\rm DH}(\w^\top \phi(\widetilde{\bx}_{\D,i}), -1) \right) \\
    & + \frac{1}{n_\U (\pi_+ - \pi_-)}  
    \sum_{i=1}^{n_\U} \left( (\gamma_2 \pi_+ - \gamma_1 \pi_-)  \ell_{\rm DH}(\w^\top \phi(\widetilde{\bx}_{\U,i}), + 1) + (\gamma_1 \pi_+ - \gamma_2 \pi_-)  \ell_{\rm DH}(\w^\top \phi(\widetilde{\bx}_{\U,i}), -1) \right).
\end{split}
\end{equation}
Using slack variables
$\boldsymbol{\xi}=\{\boldsymbol{\xi}_\S,\boldsymbol{\xi}_\D,\boldsymbol{\xi}_\U \}$ and
$\boldsymbol{\eta}=\{\boldsymbol{\eta}_\S,\boldsymbol{\eta}_\D,\boldsymbol{\eta}_\U \}$,
we can rewrite the optimization problem in Eq.\,\eqref{eq_a:sdu_obj} as follows:

\begin{equation}
\begin{split}
    \min_{\w, \boldsymbol{\xi}, \boldsymbol{\eta}} 
    & -\frac{\gamma_1 \pi_\S}{2 n_\S (\pi_+ - \pi_-)} \mathbf{1}^\top X_\S \w
    +\frac{\gamma_2 \pi_\D}{2 n_\D (\pi_+ - \pi_-)} \mathbf{1}^\top X_\D \w \\
    & + \frac{\gamma_3 \pi_+ \pi_\S}{2 n_\S (\pi_+ - \pi_-)} \mathbf{1}^\top \boldsymbol{\xi}_\S
      - \frac{\gamma_3 \pi_- \pi_\S}{2 n_\S (\pi_+ - \pi_-)} \mathbf{1}^\top \boldsymbol{\eta}_\S \\
    & - \frac{\gamma_3 \pi_- \pi_\D}{2 n_\D (\pi_+ - \pi_-)} \mathbf{1}^\top \boldsymbol{\xi}_\D
      + \frac{\gamma_3 \pi_+ \pi_\D}{2 n_\D (\pi_+ - \pi_-)} \mathbf{1}^\top \boldsymbol{\eta}_\D \\
    & + \frac{- \gamma_1 \pi_- + \gamma_2 \pi_+}{n_\U (\pi_+ - \pi_-)} \mathbf{1}^\top \boldsymbol{\xi}_\U
      + \frac{\gamma_1 \pi_+ - \gamma_2 \pi_-}{n_\U (\pi_+ - \pi_-)} \mathbf{1}^\top \boldsymbol{\eta}_\U \\
    & + \frac{1}{2} \w^\top \w
\end{split}
\end{equation}
\begin{equation}
\begin{split}
    \text{s.t.}~~~
    & \boldsymbol{\xi}_\S \geq 0,~\boldsymbol{\xi}_\S \geq \frac{1}{2} - \frac{1}{2}X_\S \w,~\boldsymbol{\xi}_\S \geq - X_\S \w, \\
    & \boldsymbol{\eta}_\S \geq 0,~\boldsymbol{\eta}_\S \geq \frac{1}{2} + \frac{1}{2}X_\S \w,~\boldsymbol{\eta}_\S \geq X_\S \w, \\
    & \boldsymbol{\xi}_\D \geq 0,~\boldsymbol{\xi}_\D \geq \frac{1}{2} - \frac{1}{2}X_\D \w,~\boldsymbol{\xi}_\D \geq - X_\D \w, \\
    & \boldsymbol{\eta}_\D \geq 0,~\boldsymbol{\eta}_\D \geq \frac{1}{2} + \frac{1}{2}X_\D \w,~\boldsymbol{\eta}_\D \geq X_\D \w, \\
    & \boldsymbol{\xi}_\U \geq 0,~\boldsymbol{\xi}_\U \geq \frac{1}{2} - \frac{1}{2}X_\U \w,~\boldsymbol{\xi}_\U \geq - X_\U \w, \\
    & \boldsymbol{\eta}_\U \geq 0,~\boldsymbol{\eta}_\U \geq \frac{1}{2} + \frac{1}{2}X_\U \w,~\boldsymbol{\eta}_\U \geq X_\U \w,
\end{split}
\end{equation}

where $\geq$ for vectors indicates element-wise inequality.
We can solve this optimization problem by the quadratic programming.
\section{Magnified Versions of Experimental Results}
\label{sec_a:exp}

In this section, we show magnified versions of the experimental results in Sec.\,\ref{sec:experiments}.

\begin{figure}[bthp] 
\subfloat[adult]{\includegraphics[width=.33\columnwidth]{img/sudusd_comp/adult_sudusd.pdf}}
\subfloat[banana]{\includegraphics[width=.33\columnwidth]{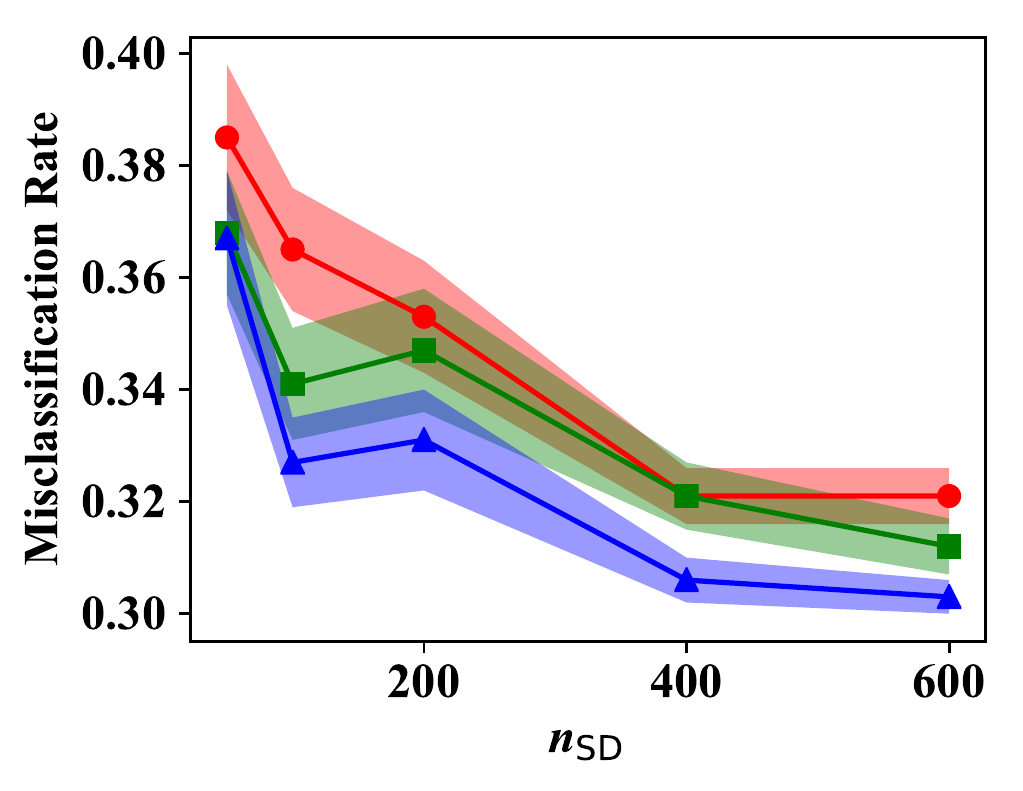}} 
\subfloat[codrna]{\includegraphics[width=.33\columnwidth]{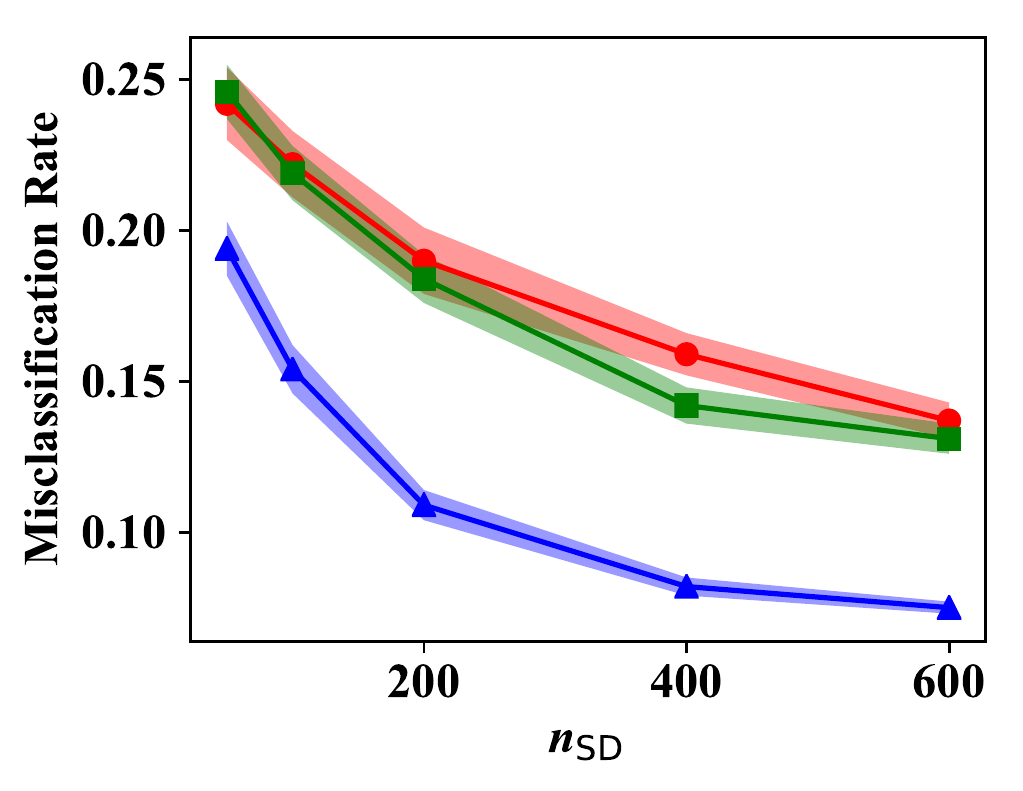}} \\ 
\subfloat[ijcnn1]{\includegraphics[width=.33\columnwidth]{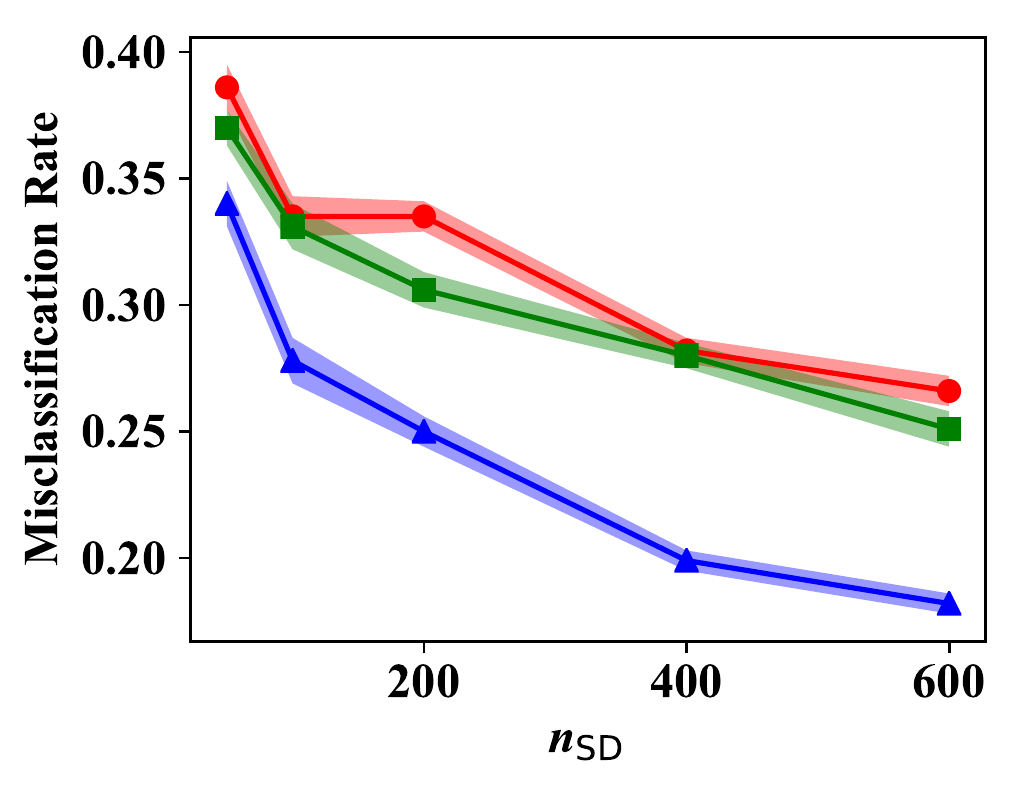}} 
\subfloat[magic]{\includegraphics[width=.33\columnwidth]{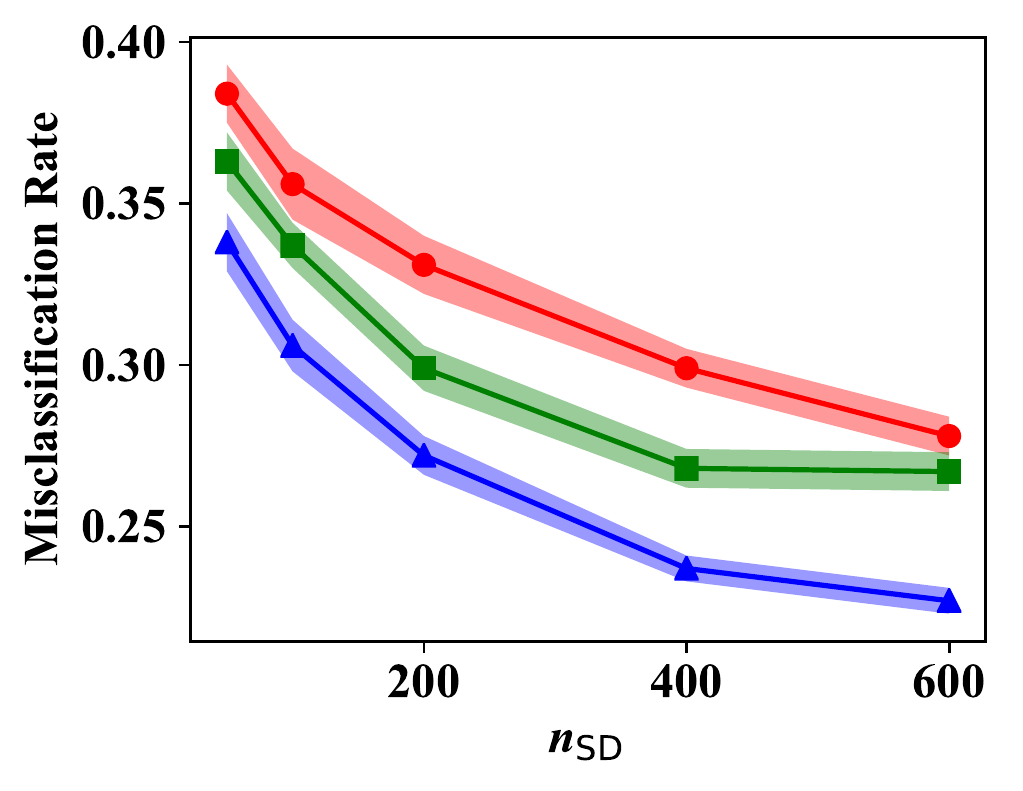}} 
\subfloat[phishing]{\includegraphics[width=.33\columnwidth]{img/sudusd_comp/phishing_sudusd.pdf}} \\  
\subfloat[phoneme]{\includegraphics[width=.33\columnwidth]{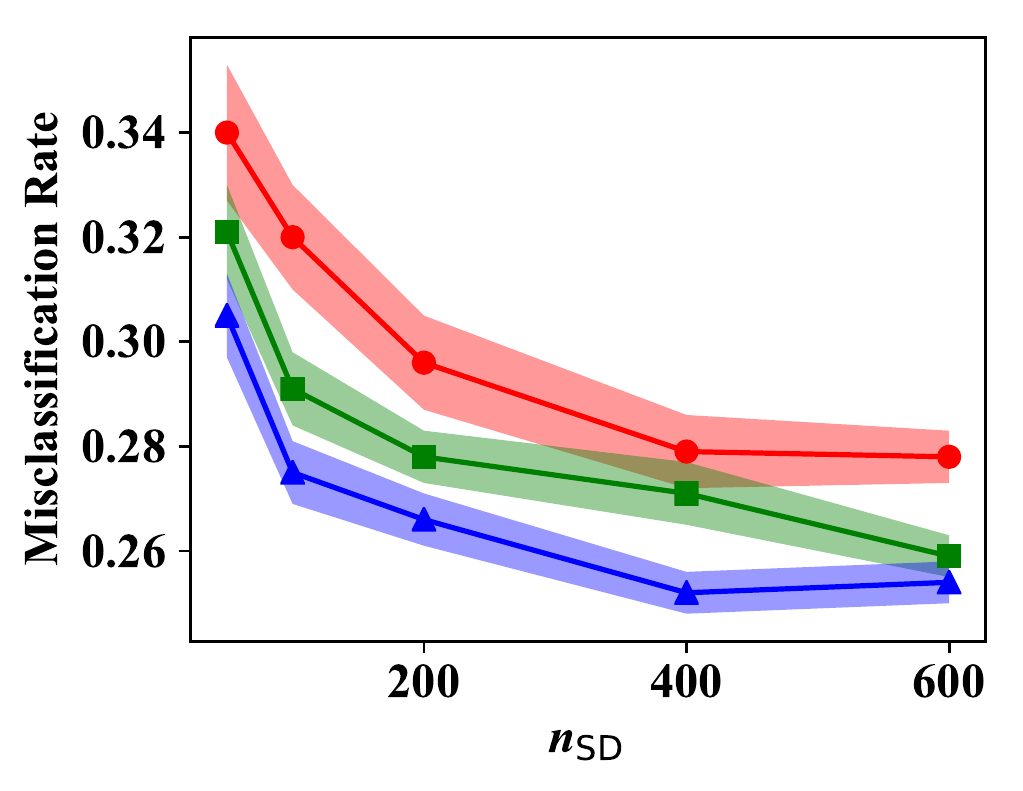}} 
\subfloat[spambase]{\includegraphics[width=.33\columnwidth]{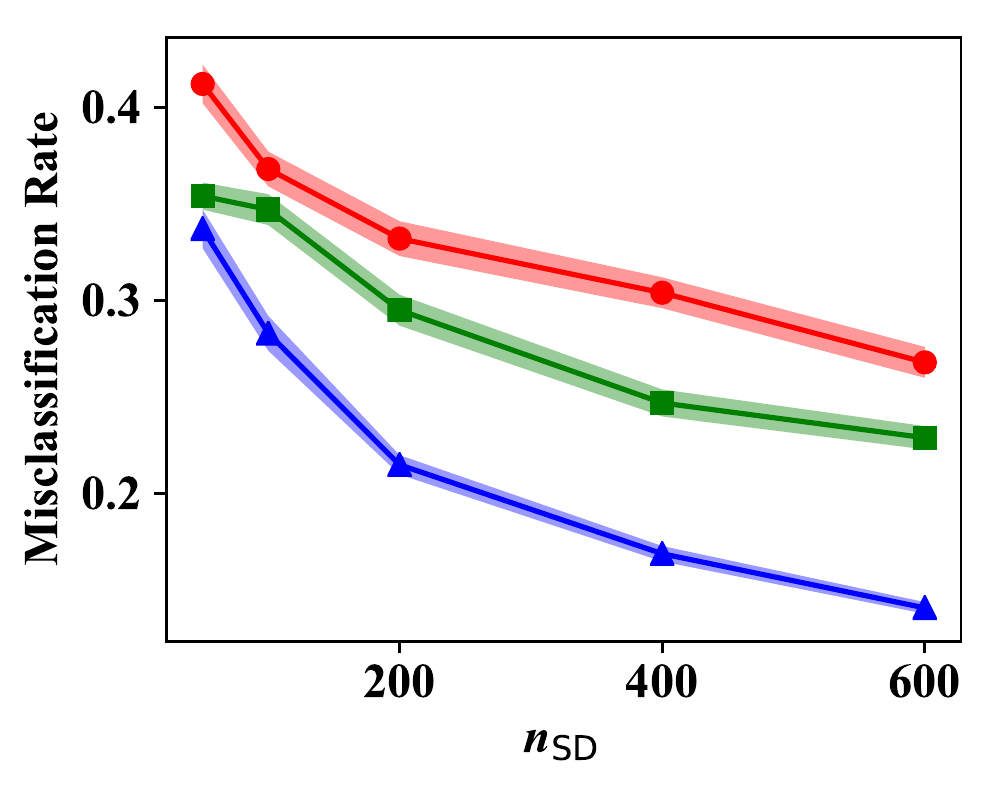}} 
\subfloat[w8a]{\includegraphics[width=.33\columnwidth]{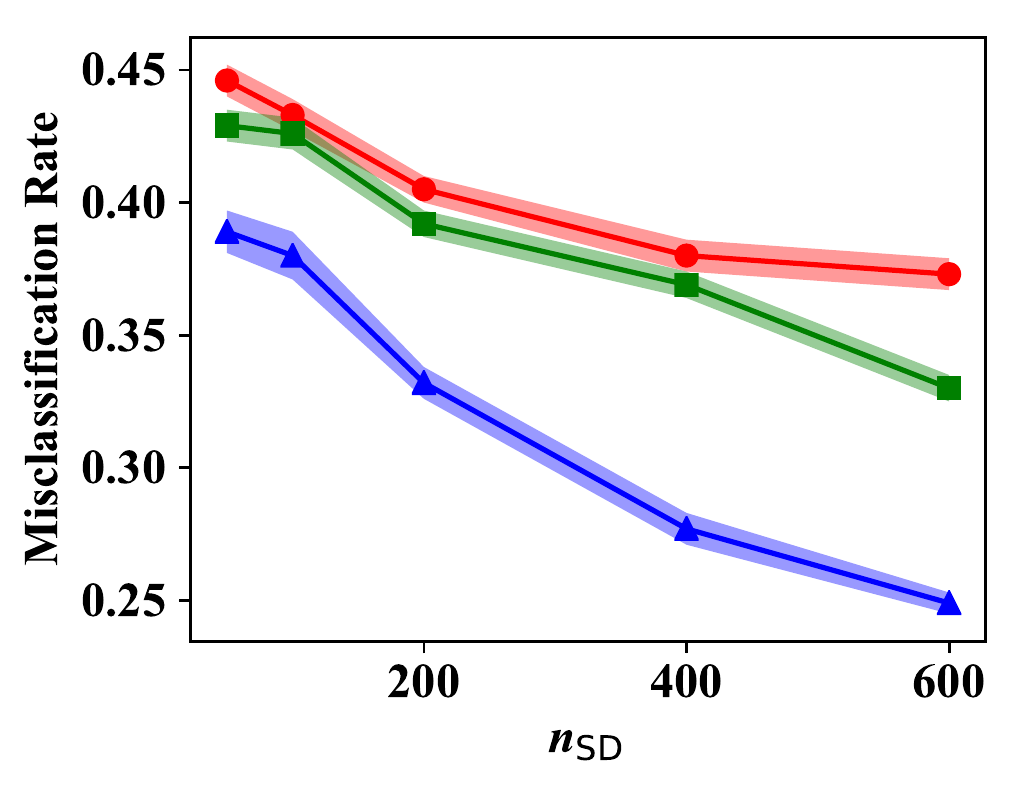}} \\ 
\subfloat[waveform]{\includegraphics[width=.33\columnwidth]{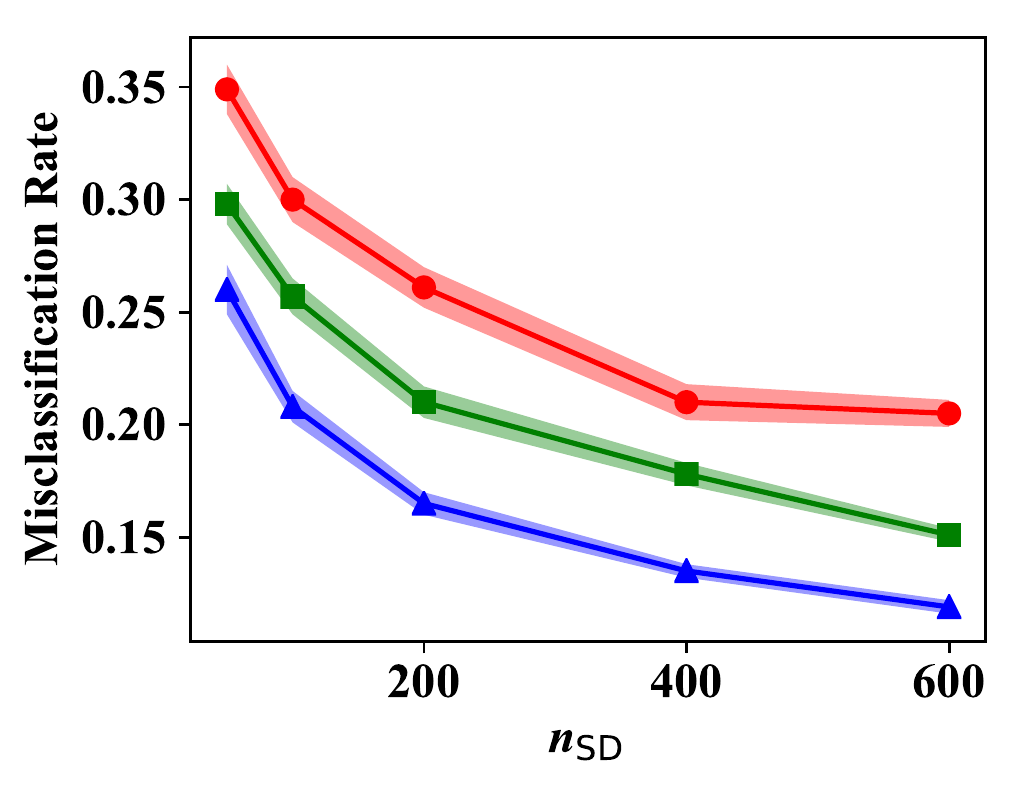}}
 \caption{
 Average misclassification rate and standard error as a function of the number of similar and dissimilar pairs over 50 trials.
 For all experiments, class prior $\pi_+$ is set to 0.7 and $n_\U$ is set to 500.
 }
\end{figure}

\begin{figure}[bthp]
\subfloat[adult]{\includegraphics[width=.33\columnwidth]{img/sdsdu_comp/adult_sdsdu.pdf}}
\subfloat[banana]{\includegraphics[width=.33\columnwidth]{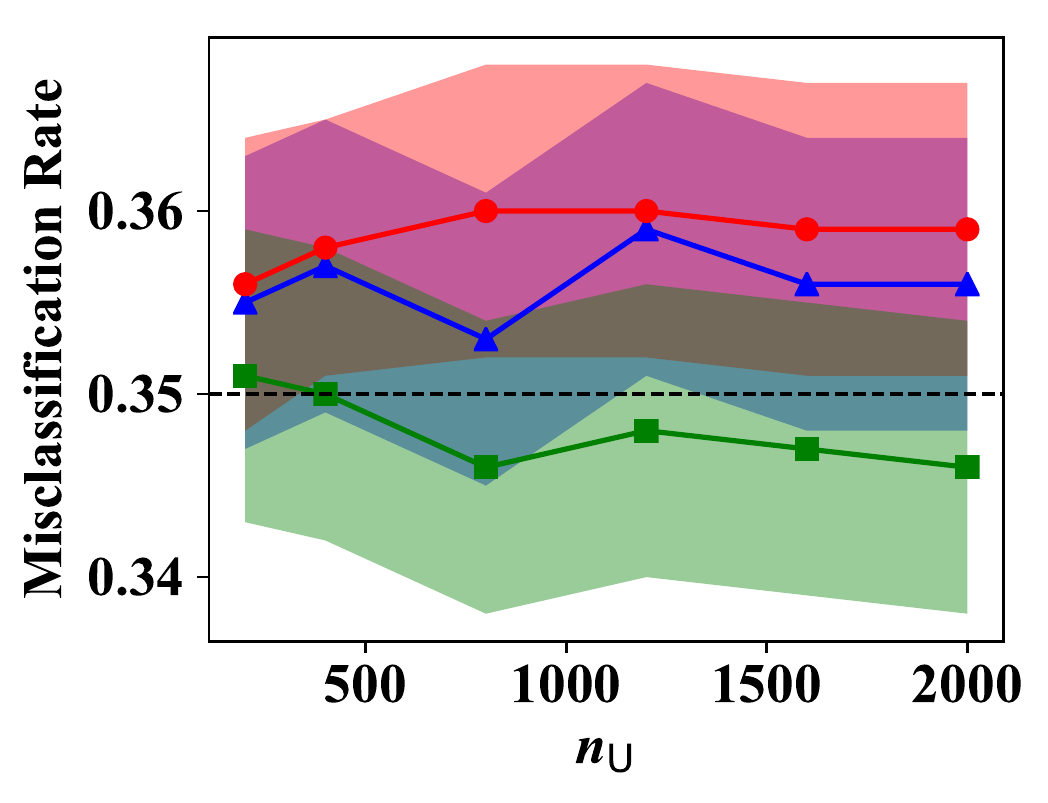}} 
\subfloat[codrna]{\includegraphics[width=.33\columnwidth]{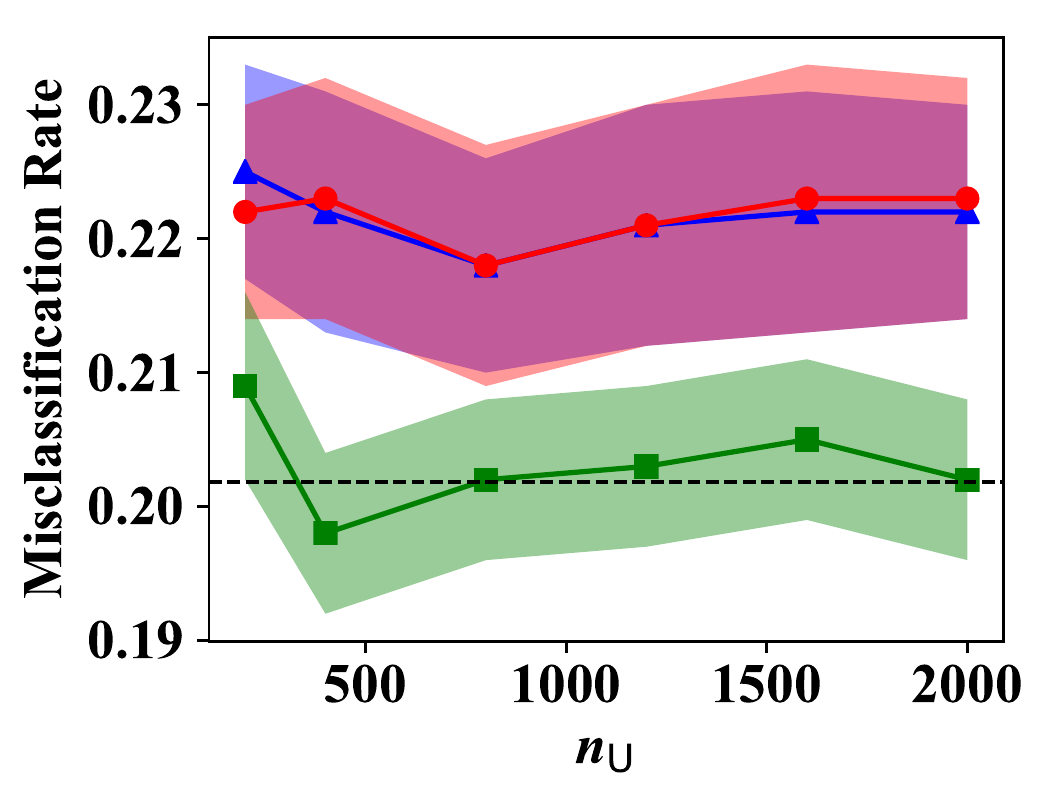}} \\
\subfloat[ijcnn1]{\includegraphics[width=.33\columnwidth]{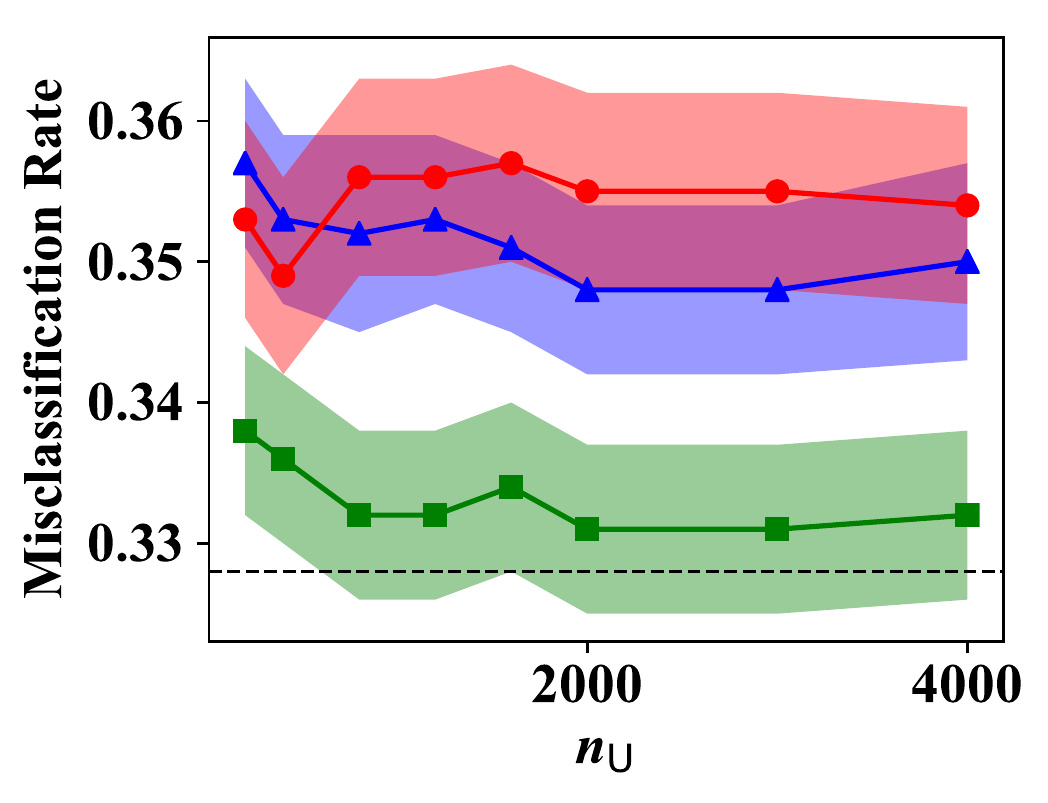}} 
\subfloat[magic]{\includegraphics[width=.33\columnwidth]{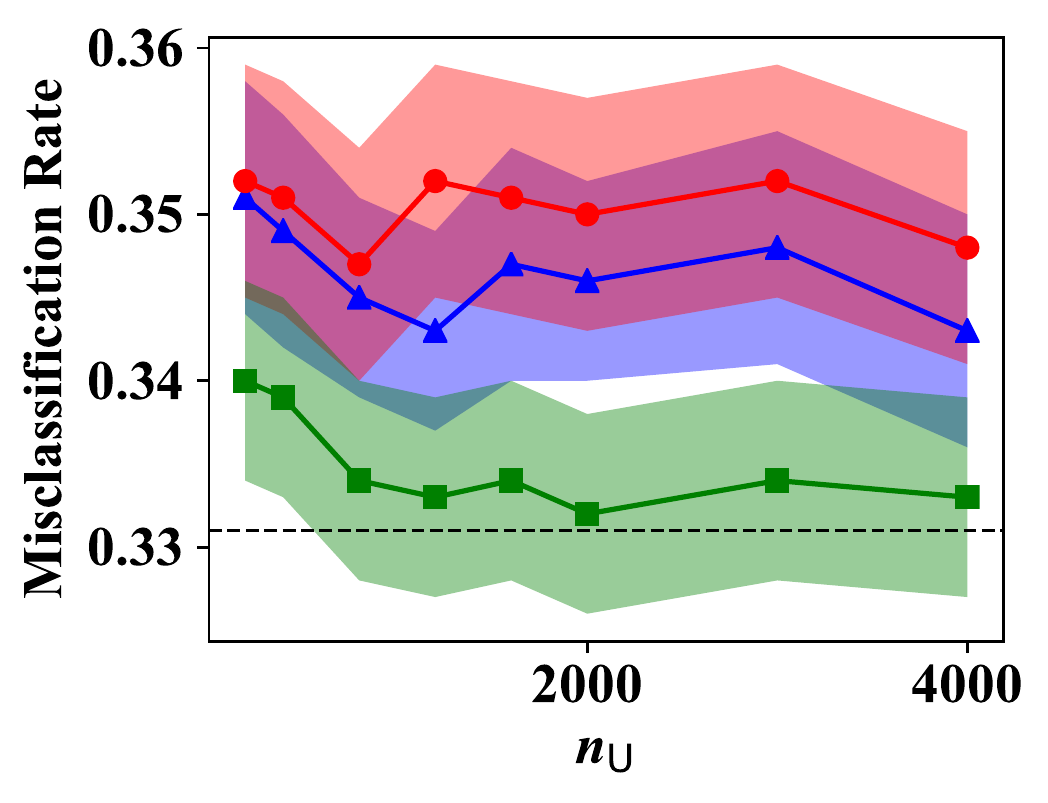}} 
\subfloat[phishing]{\includegraphics[width=.33\columnwidth]{img/sdsdu_comp/phishing_sdsdu.pdf}} \\ 
\subfloat[phoneme]{\includegraphics[width=.33\columnwidth]{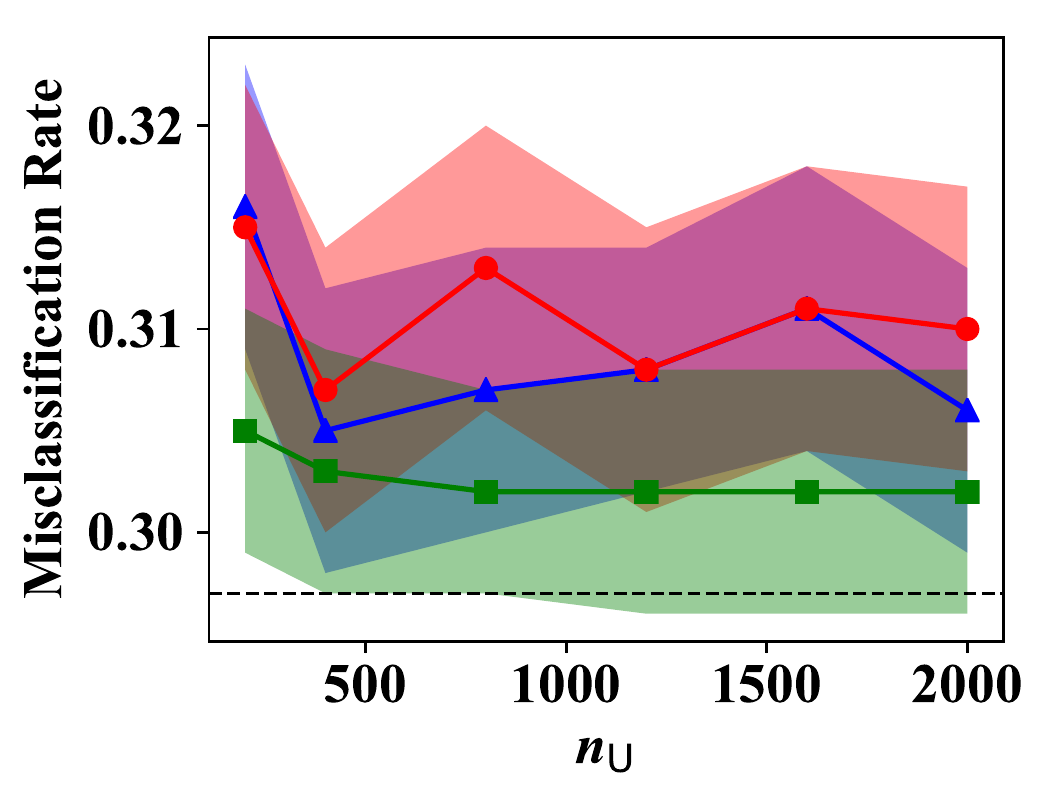}} 
\subfloat[spambase]{\includegraphics[width=.33\columnwidth]{img/sdsdu_comp/spambase_sdsdu.pdf}} 
\subfloat[w8a]{\includegraphics[width=.33\columnwidth]{img/sdsdu_comp/w8a_sdsdu.pdf}} \\
\subfloat[waveform]{\includegraphics[width=.33\columnwidth]{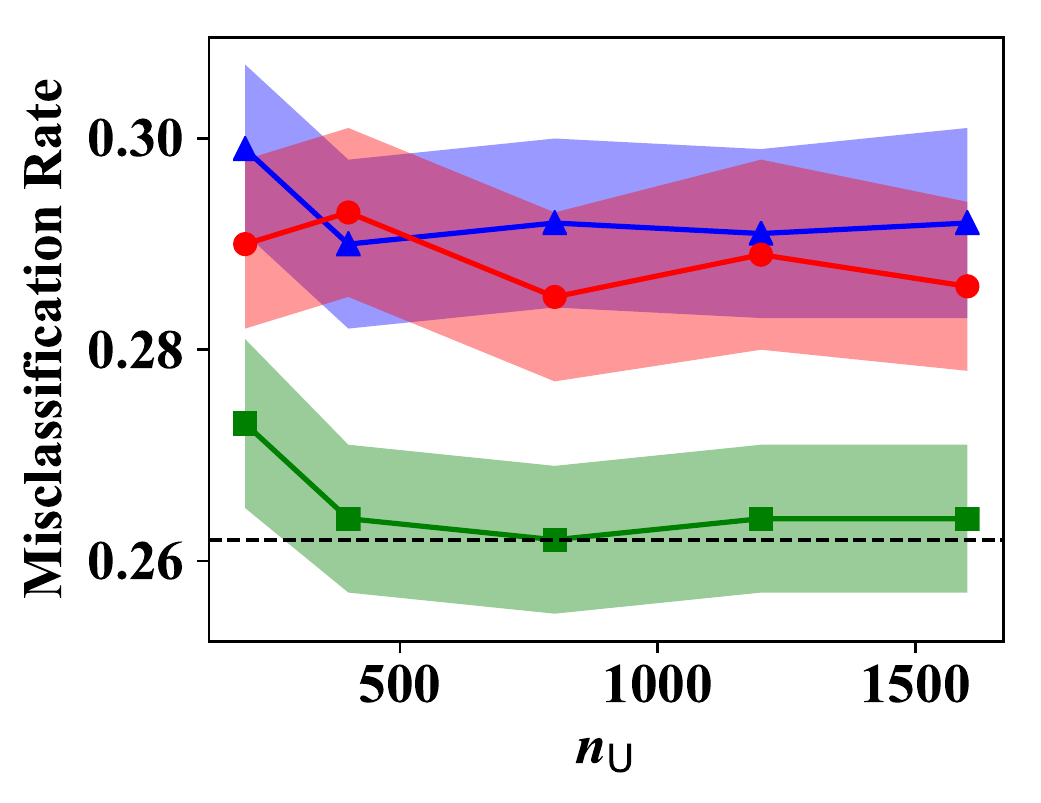}}
 \caption{
 Average misclassification rate and standard error as a function of the number of unlabeled samples over 100 trials.
 For all experiments, class prior $\pi_+$ is set to 0.7 and $n_\SD$ is set to 50.
 The mean error rate of SD classification is drawn with a black dashed line.
 }
\end{figure}

\begin{table*}[thb]
\centering
\caption{
Mean accuracy and standard error on different benchmark datasets over 50 trials.
For all experiments, class prior $\pi_+$ is set to 0.7 and $n_\U$ is set to 500.
In SDU classification, we estimate class prior from $n_\S$ and $n_\D$.
For clustering algorithms, the performances are evaluated by clustering accuracy $1-\min(r, 1-r)$, where $r$ is error rate.
Bold-faces indicate outperforming methods, chosen by one-sided t-test with the significance level 5\%.
}
\scalebox{0.66}{
\begin{tabular}{@{\extracolsep{2pt}} cccccccccccc}
\hline
 &   &  \multicolumn{2}{c}{SDSU (proposed)} & \multicolumn{2}{c}{SDDU (proposed)} & \multicolumn{2}{c}{SDSU (proposed)}   &           \multicolumn{4}{c}{Baselines}    \\
  \cline{3-4} \cline{5-6} \cline{7-8} \cline{9-12}
  Dataset &  $n_\SD$ &         Squared &     Double-Hinge &        Squared &     Double-Hinge &       Squared &     Double-Hinge &          KM &         CKM &         SSP &        ITML \\
\hline
    adult &   50 &  60.5 (0.9) &  \bf 76.1 (0.9) &  61.9 (0.9) & \bf 77.7 (0.6) &  60.5 (0.8) &  74.6 (0.9) &  65.0 (0.8) &  66.6 (1.1) &  69.5 (0.3) &  62.4 (0.7) \\
    $d=123$ &  200 &  69.9 (0.7) & \bf 82.0 (0.3) &  71.4 (0.7) & \bf 82.5 (0.3) &  69.4 (0.7) &  81.4 (0.4) &  63.3 (0.8) &  71.9 (0.9) &  69.3 (0.3) &  60.8 (0.7) \\\hline
   banana &   50 & \bf 62.1 (1.4) & \bf 61.4 (1.4) & \bf 63.9 (1.2) & \bf 63.5 (1.1) & \bf 62.1 (1.3) & \bf 62.8 (1.3) &  52.9 (0.4) &  52.7 (0.4) &  58.7 (0.7) &  53.0 (0.4) \\
   $d=2$ &  200 & \bf 65.3 (1.0) & \bf 65.6 (1.0) & \bf 66.5 (0.8) & \bf 66.9 (0.7) & \bf 65.6 (0.8) & \bf 66.7 (0.8) &  52.5 (0.2) &  52.5 (0.2) & \bf 66.5 (1.3) &  52.5 (0.2) \\\hline
   codrna &   50 & \bf 77.5 (1.3) &  67.5 (1.0) & \bf 78.1 (1.1) &  68.5 (0.8) & \bf 78.0 (1.1) &  72.8 (1.2) &  62.6 (0.5) &  61.5 (0.4) &  54.6 (1.0) &  62.7 (0.5) \\
   $d=8$ &  200 & \bf 87.1 (0.7) &  72.2 (0.6) & \bf 87.7 (0.6) &  72.7 (0.7) & \bf 87.3 (0.6) &  84.3 (0.9) &  62.8 (0.5) &  59.5 (0.5) &  53.2 (0.7) &  62.5 (0.5) \\\hline
   ijcnn1 &   50 &  65.0 (0.9) & \bf 68.0 (1.0) &  64.7 (0.8) & \bf 68.8 (0.9) &  64.4 (0.8) & \bf 66.9 (0.9) &  55.5 (0.6) &  54.7 (0.5) &  60.9 (0.8) &  55.8 (0.6) \\
   $d=22$ &  200 &  74.5 (0.7) & \bf 76.2 (0.6) & \bf 75.1 (0.7) & \bf 76.3 (0.5) &  73.8 (0.7) & \bf 76.1 (0.5) &  54.2 (0.3) &  53.1 (0.3) &  59.5 (0.8) &  54.3 (0.4) \\\hline
    magic &   50 & \bf 63.4 (1.0) &  62.8 (1.1) & \bf 65.5 (0.9) & \bf 65.1 (1.0) & \bf 64.0 (0.9) & \bf 63.6 (1.1) &  52.4 (0.2) &  51.8 (0.2) &  52.7 (0.3) &  52.5 (0.2) \\
    $d=10$ &  200 & \bf 72.0 (0.6) &  71.2 (0.8) & \bf 73.0 (0.6) &  71.4 (0.7) & \bf 71.8 (0.6) & \bf 72.5 (0.5) &  52.0 (0.2) &  51.7 (0.2) &  52.6 (0.3) &  52.0 (0.2) \\\hline
 phishing &   50 &  67.2 (1.0) &  76.7 (1.3) &  69.4 (0.8) & \bf 80.5 (0.9) &  67.1 (1.0) &  77.6 (1.1) &  62.6 (0.3) &  62.6 (0.3) &  68.1 (0.3) &  62.5 (0.3) \\
 $d=68$ &  200 &  81.6 (0.7) & \bf 86.7 (0.6) &  81.7 (0.7) & \bf 87.0 (0.4) &  79.3 (0.6) &  85.4 (0.5) &  62.6 (0.3) &  62.8 (0.3) &  68.4 (0.3) &  62.6 (0.3) \\\hline
  phoneme &   50 &  66.9 (1.2) & \bf 67.3 (1.4) &  67.9 (0.9) & \bf 69.2 (0.9) &  67.0 (1.2) &  67.2 (1.4) &  \bf 67.8 (0.3) & \bf 68.9 (0.5) &  66.5 (1.0) & \bf 67.8 (0.3) \\
  $d=5$ &  200 &  72.5 (0.6) & \bf 73.4 (0.6) &  73.5 (0.5) & \bf 74.4 (0.4) &  73.2 (0.5) &  74.2 (0.5) &  67.8 (0.3) &  71.0 (0.6) &  72.0 (0.7) &  67.9 (0.3) \\\hline
 spambase &   50 &  65.3 (1.0) & \bf 82.5 (0.7) &  66.7 (0.8) & \bf 82.9 (0.6) &  65.2 (0.8) &  80.3 (0.8) &  63.7 (1.1) &  64.2 (1.1) &  70.4 (0.3) &  61.4 (1.1) \\
 $d=57$ &  200 &  75.9 (0.9) & \bf 86.9 (0.3) &  77.9 (0.7) & \bf 87.5 (0.3) &  74.5 (0.8) &  86.7 (0.3) &  61.8 (1.2) &  70.4 (0.6) &  70.7 (0.3) &  60.6 (1.2) \\\hline
      w8a &   50 &  57.2 (0.9) & \bf 70.9 (1.2) &  60.8 (0.9) & \bf 73.2 (0.9) &  55.9 (0.7) &  70.2 (1.3) &  69.3 (0.3) &  66.0 (0.7) &  64.2 (0.6) &  69.0 (0.3) \\
      $d=300$ &  200 &  62.6 (0.6) & \bf 80.1 (0.7) &  64.1 (0.7) & \bf 80.2 (0.7) &  61.0 (0.8) &  77.8 (0.6) &  69.3 (0.3) &  56.3 (0.6) &  67.2 (0.7) &  68.6 (0.3) \\\hline
 waveform &   50 &  69.9 (1.2) & \bf 80.8 (1.3) &  72.9 (1.0) & \bf 83.2 (1.0) &  71.8 (0.9) &  79.6 (1.3) &  51.5 (0.2) &  51.6 (0.2) &  53.3 (0.3) &  51.5 (0.2) \\
 $d=21$ &  200 &  82.2 (0.7) & \bf 86.3 (0.6) &  83.2 (0.6) & \bf 86.7 (0.6) &  82.0 (0.7) & \bf 86.3 (0.5) &  51.5 (0.2) &  51.5 (0.1) &  53.1 (0.3) &  51.5 (0.2) \\
\hline
\end{tabular}}
\end{table*}
\end{document}